\pgfplotsset{compat=newest}
\def\ci{\perp\!\!\!\perp}
\newtheorem{theorem}{Theorem}
\newtheorem{lemma}{Lemma}
\newtheorem{prop}{Proposition}
\newtheorem{cond}{Condition}
\DeclareMathOperator{\tr}{tr}
\DeclarePairedDelimiter\floor{\lfloor}{\rfloor}
\newcommand{\eat}[1]{}
\renewcommand*\env@matrix[1][*\c@MaxMatrixCols c]{
  \hskip -\arraycolsep
  \let\@ifnextchar\new@ifnextchar
  \array{#1}}
\begin{document}

\title{Learning Integral Representations of Gaussian Processes}

\author{Zilong Tan \thanks{Work done while the author was at Duke University.}\\
Machine Learning Department\\
Carnegie Mellon University\\
Email: {\tt zilongt@cs.cmu.edu}
\and
Sayan Mukherjee\\
Departments of Statistical Science\\
Computer Science, Mathematics, \\
Biostatistics \& Bioinformatics \\
Duke University \\
Email: {\tt sayan@stat.duke.edu}}

\maketitle

\begin{abstract}
We propose a representation of Gaussian processes (GPs) based on powers of the integral operator defined by a kernel function, we call these stochastic processes
integral Gaussian processes (IGPs). Sample paths from IGPs are functions contained within the reproducing kernel Hilbert space (RKHS) defined by the kernel function, in contrast sample paths from the standard GP are not functions within the RKHS. We develop computationally efficient non-parametric regression models based on IGPs. The main innovation in our regression algorithm is the construction of a low dimensional subspace that captures the information most relevant to explaining variation in the response. We use ideas from supervised dimension reduction to compute this subspace. The result of using the construction we propose involves significant improvements in the computational complexity of estimating kernel hyper-parameters as well as reducing the prediction variance.
\end{abstract}

\section{Introduction}

Gaussian processes (GPs) \citep{Doob44} have been used extensively for non-parametric regression and density estimation in the statistics and machine learning literature \citep{Stein99,Rasmussen06}. Models based on GPs enjoy the flexibility that nonlinear interpolation and spatial structure can be naturally modeled. The covariance kernel of the GP is the crucial component that specifies the class of functions the GP can realize. A key observation that motivates this work is that the sample paths (realizations) of the GP with covariance kernel $\kappa\left(\cdot,\cdot\right)$ is outside the reproducing kernel Hilbert space (RKHS) $\mathcal{H}_{\kappa}$ with the reproducing kernel $\kappa$ almost surely by the zero-one law of GPs \citep{Kallianpur70,Driscoll73,Lukic01,Wahba90}. Thus, the GP perspective is natural for point estimates such as the posterior mean but is problematic for posterior sample paths from the RKHS \citep{Liang07,Pillai07,Dunson10}. In this paper, we present a construction of GPs with sample paths in a given RKHS, and develop fast algorithms for non-parametric regression. The utility of our approach involves posterior based hyper-parameter inference as well as variable selection in Bayesian kernel models \citep{Neal97,Bishop03,Dunson10,Crawford18}.

Let $\left\{f\left(x\right): x \in \mathcal{X}\right\}$ be a zero-mean GP indexed by a separable metric space $\mathcal{X}$. The standard GP regression model is specified by
\begin{align}
\label{eq:lr}
y_i = f\left(x_i\right) + \epsilon_i
\end{align}
with independent noise $\epsilon_i \sim \mathcal{N}\left(0,\sigma^2\right)$. Denote the covariance kernel as  $\kappa$ and the the RKHS induced by $\kappa$ as
$\mathcal{H}_{\kappa}$. Approaches such as \cite{Tipping01,Sollich02,Chakraborty12} assume that the $f\left(\cdot\right)$ is in a finite-dimensional subspace \begin{align*}
    \left\{f\left(\cdot\right) = \sum_{i=1}^n a_i \kappa\left(\cdot,x_i\right) \;\Big|\; x_i \in \mathcal{X}, a_i \in \mathbb{R} \right\}
\end{align*}
of $\mathcal{H}_\kappa$ dependent on the training data $\left\{x_i\right\}_{i=1}^n$. Generalizing the GP regression model \eqref{eq:lr} to the infinite-dimensional $\mathcal{H}_\kappa$, by directly placing the GP prior on $\mathcal{H}_\kappa$ is problematic since a sample path (realization) of the GP lies almost surely outside the RKHS $\mathcal{H}_{\kappa}$, i.e., $P\left(f\left(\cdot\right) \in \mathcal{H}_\kappa\right) = 0$ if $\mathcal{H}_\kappa$ is infinite dimensional (see e.g., Theorem 5.1 of \citealp{Kallianpur70}; \citealp{Driscoll73,Wahba90}).

One solution to this problem was given in \citep{Pillai07,Liang07} by assuming that $f\left(x\right)$ is specified by the following  integral operator
\begin{align}
\label{eq:int-rep}
    f\left(x\right) = \int_{\mathcal{X}}\kappa\left(x,z\right) \nu\left(z\right) d \pi\left(z\right)
\end{align}
where $\nu\left(\cdot\right)$ is the sample path form a GP and $\pi$ is a finite measure. Their approach can be viewed as the embedding of signed measures (see e.g., Chapter 4 of \citealp{Berlinet03}), and the resulting $f\left(\cdot\right)$ can be thought of as a GP
\citep{Ito54,Gelfand67}. A limitation of the GP specified by equation
\eqref{eq:int-rep} is that only a subspace of the functions in $\mathcal{H}_\kappa$ can be realized.
Indeed, the relation between the function space spanned by \eqref{eq:int-rep} and $\mathcal{H}_\kappa$ has been examined by considering integral operators based on powers of the covariance kernel \citep{Steinwart12}. We combine the integral representation in equation \eqref{eq:int-rep} with the idea of using powers of integral operators to propose a new GP model for which we can precisely characterize both the RKHS as well as sample paths of the functions drawn from the GP.

There are three main contributions of this paper with respect to both specifying GP models as well as efficient and accurate non-parametric regression based on GPs:
\begin{enumerate}
\item We first introduce a GP representation which expands the function space spanned by \eqref{eq:int-rep} while still having the sample paths confined to an RKHS specified by the kernel function. This is accomplished by using an integral operator defined by powers of kernels \citep{Steinwart12,Kanagawa18}. We demonstrate that the space of functions realized by this representation is strictly larger than those given by the specification in \eqref{eq:int-rep}. 

\item We then present a finite-sample variant of the GP that has nice properties in that the class of functions realized
include solutions to Tikhonov regularization problems \citep{Cucker02,Hofmann08} as well as several Bayesian kernel models \citep{Tipping01,Sollich02,Chakraborty12}. The key insight in formulating this variant is a dual construction of GPs
as the class of functions
\begin{align*}
    f\left(x\right) = \sum_{i=1}^n \beta_i \kappa\left(x,x_i\right), \quad \left(\beta_1,\beta_2,\cdots,\beta_n\right) \sim \mathcal{N}\left(\bm{0},\bm{\Sigma}_\beta\right)
\end{align*}
in contrast to the more classical construction of a GP based on eigenfunctions (elaborated in \cref{sec:GP-func})
\begin{align*}
    f\left(x\right) = \sum_{j=1}^l \alpha_j e_j\left(x\right), \quad \alpha_j \stackrel{i.i.d.}{\sim} \mathcal{N}\left(0,\lambda_j\right)
\end{align*}
where $\{e_j\}_{j=1}^l$ and $\left\{\lambda_j\right\}_{j=1}^l$ are the eigenfunctions and the corresponding eigenvalues for the integral operator defined by the kernel $\kappa$.

 \item To speed up computation as well as to potentially reduce prediction variance, we
 compute a low rank approximation of the RKHS using ideas from  sufficient dimension reduction (SDR) \citep{Li91,Cook98,Fukumizu04}. We show that the SDR method is natural and generally yields a low-rank parameterization of the covariance kernel. Fast algorithms are developed for computing this approximation. The overall computational complexity for learning the proposed low rank GP is $O\left(n^2 m\right)$ per iteration which improves the complexity $O\left(n^3\right)$ of the standard GP. Further speedups can be achieved by combining the algorithm with the sparse GP methods \citep{Smola01,Seeger03,Snelson06}.

\item Our approach is grounded in the theory of RKHS and sufficient dimension reduction. An exact characterization of the function space realized by the integral representation is given, suggesting that our approach is particularly useful for kernels with rapidly decaying eigenvalues. This holds true for several widely used kernels \citep{Zhu98,Belkin18}. Second, we generalize the classic SDR likelihood \citep{Cook09} on $\mathbb{R}^d$ to the RKHS setting. We justify that the RKHS can be well approximated by a low-rank SDR subspace if the eigenvalue of the reproducing kernel decays at a faster rate than $O\left(1/n\right)$.
\end{enumerate}

\subsection{Outline of the Paper}
The remainder of this paper is organized as follows: in \cref{sec:IGP}, we introduce the the GP representation using powers of the integral operators. In addition, we discuss computational issues and posterior inference based on the SDR approximation to the RKHS. In \cref{sec:theory}, we provide the theoretical justification for the results stated in \cref{sec:IGP} as well as the approximation to the RKHS using SDR subspaces. In \cref{sec:alg}, we develop fast algorithms for estimating the parameters of our approach. In \cref{sec:exp}, we illustrate the difference between our approach and the standard GP regression model, and report competitive experimental results on a diverse collection of real-life datasets. In \cref{sec:concl}, we conclude this paper.

\subsection{Notation}
We use bold lowercase letters for vectors, bold capital letters for matrices, calligraphic letters for sets. The $i$-th row and $j$-th column of a matrix $\bm{A}$ are denoted by $\bm{A}_{i:}$ and $\bm{A}_j$, respectively. In addition, we write $\bm{A}_\bot$ for the orthogonal complement to the column space of $\bm{A}$, $\tr\left(\cdot\right)$ for the trace, and $\det\left(\cdot\right)$ for the determinant. $\bm{I}_n$ and $\bm{1}_n$ denote respectively the $n$-by-$n$ identity matrix and a (column) $n$-vector of all ones. Random variables $X$ and $Y$ represent the covariates vector and the response, whose realizations are the matrix $\bm{X}$ and $\bm{y}$ with the $i$-th row corresponding to the $i$-th observation. Similarly, $\kappa\left(\bm{X},\bm{Z}\right)$ denotes the matrix whose $\left(i,j\right)$-th element is specified by $\kappa\left(\bm{X}_{i:},\bm{Z}_{j:}\right)$. We also denote by $\bm{K}_{\kappa}$ the kernel matrix of a kernel function $\kappa$, whose $\left(i,j\right)$-th entry is specified by $\kappa\left(x_i,x_j\right)$.

\section{Integral Representations of Gaussian Processes}
\label{sec:IGP}

Let $\left(\mathcal{X},\mathcal{B},\mu\right)$ be a measure space for some finite measure $\mu$. Throughout this paper, we assume that $\kappa$ is a continuous positive definite kernel measurable on $\mathcal{X}\times\mathcal{X}$. In addition, $\kappa$ has an infinitely number of nonzero eigenvalues, and is of trace class, i.e., the sum of eigenvalues is finite, a property that characterizes covariance kernels (see e.g., Section 2.3 of \citealp{Horvath12}).

We consider the GP obtained by applying an integral operator $\mathscr{K}^p$ to a GP $\nu: \mathcal{X} \times \Omega \mapsto \mathbb{R}$ on a probability space $\left(\Omega,\mathcal{F},P\right)$ with covariance kernel $\kappa_{\nu}$. The integral operator $\mathscr{K}^p: \mathcal{L}^2\left(\mathcal{X},\mathcal{B},\mu\right) \mapsto  \mathcal{L}^2\left(\mathcal{X},\mathcal{B},\mu\right)$ is based on the powers of kernels \citep{Steinwart12,Kanagawa18}, and is given by
\begin{align*}
\mathscr{K}^p \nu \coloneqq \int_{\mathcal{X}} \kappa_p\left(\cdot,z\right) \nu\left(z\right) d \mu\left(z\right)\quad \text{with} \quad \kappa_p\left(x,z\right) \coloneqq \sum_{i=1}^\infty \lambda_i^p e_i\left(x\right) e_i\left(z\right), \quad \frac{1}{2} \leq p \leq 1,
\end{align*}
where $\left\{e_i\right\}$ denotes the complete set of orthonormal eigenfunctions of $\kappa$ corresponding to the non-increasing sequence of eigenvalues $\left\{\lambda_i\right\}$. This integral representation is an instance of generalized stochastic processes introduced in \citep{Ito54,Gelfand67}, where $\kappa_p\left(x,z\right)$ can be viewed as a weighting function. The resulting GP $\mathscr{K}^p \nu$ is written
\begin{align}
\label{eq:IGP-def}
\mathcal{M} \coloneqq \left\{f\left(x\right) \coloneqq \sum_{i=1}^\infty \lambda_i^p \varphi_i e_i\left(x\right) \;\Big|\; \varphi_i \coloneqq \int_{\mathcal{X}} \nu\left(z\right) e_i\left(z\right) d \mu\left(z\right), x\in\mathcal{X} \right\}.
\end{align}
In particular, $\mathcal{M}$ is equivalent to \eqref{eq:int-rep} for the case that $p=1$. Setting $p<1$, the representation \eqref{eq:IGP-def} becomes a superset of \eqref{eq:int-rep} but can still have all sample paths confined within $\mathcal{H}_{\kappa}$ as we will show in \cref{sec:func-spc}. In the remainder of this paper, the GP \eqref{eq:IGP-def} will be termed the Integral GP (IGP), and $\gamma\left(f; \kappa_p,\kappa_{\nu}\right)$ denotes the corresponding prior. 

Let us consider some basic properties of the IGP. For an IGP $f\in\mathcal{M}$, the curve $f\left(x\right)$, $x \in \mathcal{X}$, is a random element of $\mathcal{L}^2\left(\mathcal{X},\mathcal{B},\mu\right)$, and is square integrable since
\begin{align*}
\mathbb{E}\left\|f\right\|^2 
= \mathbb{E} \sum_{i=1}^\infty \left(\lambda_i^p\int_{\mathcal{X}} \nu\left(z\right) e_i\left(z\right) d \mu\left(z\right)\right)^2
\leq \lambda_1^{2 p} \mathbb{E}\left\|\nu\right\|_{\mathcal{L}^2}^2
< \infty
\end{align*}
holds by Parseval's identity. Its mean and covariance functions are written
\begin{gather}
\mathbb{E}f = \sum_{i=1}^\infty \lambda_i^p \mathbb{E}\left(\varphi_i\right) e_i \label{eq:IGP-mean}\\
\kappa_{\mathcal{M}}\left(x,z\right) 
= \int_{\mathcal{X}}\int_{\mathcal{X}} 
\kappa_{\nu}\left(s,t\right) \kappa_p\left(x,s\right) \kappa_p\left(z,t\right) d \mu\left(s\right) d \mu\left(t\right). \label{eq:IGP-cov}
\end{gather}
Let $\bm{K}$ and $\bm{K}_{\kappa_{\mathcal{M}}}$ be respectively the kernel matrix $K_{ij} = \kappa\left(x_i,x_j\right)$ and the covariance matrix $\left[\bm{K}_{\kappa_{\mathcal{M}}}\right]_{ij} \coloneqq \kappa_{\mathcal{M}}\left(x_i,x_j\right)$ on a finite sample $\left\{x_i\right\}_{i=1}^n$. By analogy, \cref{prop:sample-cov} gives the covariance matrix $\bm{K}_{\kappa_{\mathcal{M}}}$. The proof is straightforward by observing that $\lambda_i = \eta_i/n$ and $e_i\left(x_j\right) = \sqrt{n}b_{ij}$ for eigenvalue-eigenvector pairs $\left(\eta_i,\bm{b}_{i}\right)$ of $\bm{K}$.
\begin{prop}
\label{prop:sample-cov}
Suppose that $\mu = \frac{1}{n}\sum_{i=1}^n \delta_{x_i}$ where $\delta_{x_i}$ is the Dirac measure, the covariance matrix of \eqref{eq:IGP-cov} is given by $\bm{K}_{\kappa_\mathcal{M}} = n^{-2p} \, \bm{K}^p \bm{K}_\nu \bm{K}^p$.
\end{prop}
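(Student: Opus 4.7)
The plan is to plug the empirical measure $\mu = \frac{1}{n}\sum_{i=1}^n \delta_{x_i}$ directly into the covariance formula \eqref{eq:IGP-cov} to convert the double integral into a double sum, recognize the result as a matrix triple product involving a ``kernel matrix'' of $\kappa_p$, and then use the stated Nyström-type correspondence between the spectral data of the operator $\mathscr{K}$ (computed w.r.t.\ the empirical measure) and the spectral data of the kernel matrix $\bm{K}$ to convert that middle object into $\bm{K}^p$ up to the appropriate power of $n$.

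Concretely, evaluating \eqref{eq:IGP-cov} at $x=x_i$, $z=x_j$ under the empirical measure collapses each integral to an average over the sample, giving
\begin{align*}
\bigl[\bm{K}_{\kappa_{\mathcal{M}}}\bigr]_{ij}
= \frac{1}{n^{2}}\sum_{k=1}^{n}\sum_{l=1}^{n}\kappa_{p}(x_i,x_k)\,\kappa_{\nu}(x_k,x_l)\,\kappa_{p}(x_j,x_l),
\end{align*}
which is exactly the $(i,j)$ entry of $\tfrac{1}{n^{2}}\bm{K}_{p}\bm{K}_{\nu}\bm{K}_{p}$, where $[\bm{K}_{p}]_{ij}=\kappa_{p}(x_i,x_j)$. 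It remains to relate $\bm{K}_{p}$ to $\bm{K}^{p}$.

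For this I would use the Nyström identification spelled out just before the statement: if $(\eta_i,\bm{b}_i)$ are the eigenpairs of $\bm{K}$ (with $\bm{b}_i$ orthonormal in $\mathbb{R}^n$), then the eigenvalues and eigenfunctions of the integral operator $\mathscr{K}$ with respect to $\mu$ are $\lambda_i=\eta_i/n$ and $e_i(x_j)=\sqrt{n}\,b_{ij}$. Plugging these into the Mercer-type expansion $\kappa_{p}(x,z)=\sum_{i}\lambda_{i}^{p}e_{i}(x)e_{i}(z)$ yields
\begin{align*}
\kappa_{p}(x_i,x_j)=\sum_{k}\Bigl(\frac{\eta_k}{n}\Bigr)^{\!p}\!\bigl(\sqrt{n}\,b_{ki}\bigr)\bigl(\sqrt{n}\,b_{kj}\bigr)
= n^{1-p}\sum_{k}\eta_{k}^{p}\,b_{ki}b_{kj}
= n^{1-p}\,[\bm{K}^{p}]_{ij},
\end{align*}
using the spectral decomposition $\bm{K}^{p}=\sum_{k}\eta_{k}^{p}\bm{b}_{k}\bm{b}_{k}^{\top}$. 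Hence $\bm{K}_{p}=n^{1-p}\bm{K}^{p}$, and substituting into the triple product gives
\begin{align*}
\bm{K}_{\kappa_{\mathcal{M}}}=\frac{1}{n^{2}}\,n^{1-p}\bm{K}^{p}\,\bm{K}_{\nu}\,n^{1-p}\bm{K}^{p}=n^{-2p}\bm{K}^{p}\bm{K}_{\nu}\bm{K}^{p}.
\end{align*}

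The only real subtlety is spectral bookkeeping: one must verify the scaling $\lambda_i=\eta_i/n$ and the $\sqrt{n}$ normalization of the eigenfunctions at the sample points (which come from $\mu$ placing mass $1/n$ per point so that the $\mathcal{L}^{2}(\mu)$-orthonormality $\int e_i e_j\,d\mu=\delta_{ij}$ forces $\sum_{\ell}e_i(x_\ell)e_j(x_\ell)=n\delta_{ij}$). Once those are in hand the proof is just the bookkeeping above; no convergence issue arises because the relevant spectrum is finite under the empirical measure.
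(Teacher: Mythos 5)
Your proof is correct and follows exactly the route the paper intends: the paper gives only the one-line hint that the result follows from $\lambda_i = \eta_i/n$ and $e_i(x_j) = \sqrt{n}\,b_{ij}$, and your argument is precisely the computation that hint compresses (collapse the double integral to $n^{-2}\bm{K}_p\bm{K}_\nu\bm{K}_p$, then use the Nystr\"om scaling to get $\bm{K}_p = n^{1-p}\bm{K}^p$). The bookkeeping on the $\mathcal{L}^2(\mu)$-normalization and the power of $n$ checks out.
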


Another perspective is that the covariance function $\kappa_{\mathcal{M}}$ is non-stationary which enables the IGP to model spatial data. For example, consider the setting $\kappa_{\nu} = \kappa$ and $p = 1/2$ in \eqref{eq:IGP-def}, the covariance kernel of the IGP is simplified to
\begin{align}
\label{eq:cov-kern}
\kappa_{\mathcal{M}}\left(x,z\right) = \int_{\mathcal{X}} \kappa\left(x,s\right) \kappa\left(z,s\right) d \mu\left(s\right).
\end{align}
This gives us a non-stationary covariance kernel since it is no longer a function of the distance alone. Note that we do not require $\kappa$ to be non-stationary as well. The covariance kernel \eqref{eq:cov-kern} can be viewed as the correlation between two points $x$ and $z$ taking into account neighborhood locations $s$. In this example, \eqref{eq:cov-kern} is the spatial correlation kernel proposed in \citep{Higdon98}.

\subsection{Characterizing the Function Space}
\label{sec:func-spc}
Recall that our goal is to construct a GP whose sample paths are confined in a given RKHS, and can realize the largest possible subspace of the RKHS. In this section, we outline how the IGP meets this goal, and defer the proof until \cref{sec:proofs}.

First, the IGP representation \eqref{eq:IGP-def} can realize an expanding subspace of $\mathcal{H}_\kappa$ as $p$ decreases, and the entire $\mathcal{H}_\kappa$ for $p = 1/2$ and some $\nu \in \mathcal{L}^2\left(\mathcal{X}\times\Omega,\mathcal{F},P\right)$. \cref{prop:func-inf} characterizes the function space of the IGP. The proof of the proposition is straightforward using Mercer's representation of the RKHS discussed in \cref{sec:rkhs}.

\begin{prop}
\label{prop:func-inf}
Suppose that $\left\{\nu\left(x\right): x \in \mathcal{X}\right\}$ is a GP on $\mathcal{L}^2\left(\Omega,\mathcal{F},P\right)$. Then, the space of functions realizable by the IGP \eqref{eq:IGP-def} is given by $\left\{\sum_{i=1}^\infty w_i e_i : \sum_{i=1}^\infty w_i^2/\lambda_i^{2 p} < \infty\right\}$.
\end{prop}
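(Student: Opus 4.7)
The plan is to exploit Mercer's representation to translate everything into the orthonormal basis $\{e_i\}$ of $\mathcal{L}^2(\mathcal{X},\mathcal{B},\mu)$, so that the map sending $\nu$ to $f = \mathscr{K}^p \nu$ becomes a coordinatewise multiplication by $\lambda_i^p$. The claim then reduces to an isomorphism between two weighted $\ell^2$ spaces.

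First, I would invoke Mercer's theorem (which, as hinted in the excerpt, is recalled in \cref{sec:rkhs}) to get that $\{e_i\}$ is a complete orthonormal system in $\mathcal{L}^2(\mathcal{X},\mathcal{B},\mu)$ under our standing assumptions on $\kappa$ (continuous, positive definite, trace-class, infinitely many nonzero eigenvalues). Given any GP $\nu \in \mathcal{L}^2(\Omega\times\mathcal{X})$, expand sample paths as $\nu = \sum_i \varphi_i e_i$ with $\varphi_i = \langle \nu, e_i \rangle_{\mathcal{L}^2}$; by Parseval, $\sum_i \varphi_i^2 = \|\nu\|_{\mathcal{L}^2}^2$, which is finite almost surely since $\mathbb{E}\|\nu\|_{\mathcal{L}^2}^2 < \infty$. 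Substituting into the IGP definition \eqref{eq:IGP-def} gives $f = \sum_i w_i e_i$ with $w_i := \lambda_i^p \varphi_i$, and hence $\sum_i w_i^2/\lambda_i^{2p} = \sum_i \varphi_i^2 < \infty$ almost surely. This establishes the forward inclusion.

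For the converse, I would explicitly exhibit a generating $\nu$ for every target coefficient sequence $(w_i)$ satisfying $\sum_i w_i^2/\lambda_i^{2p} < \infty$. Setting $\varphi_i := w_i/\lambda_i^p$ and $\nu := \sum_i \varphi_i e_i$ produces an element of $\mathcal{L}^2(\mathcal{X},\mathcal{B},\mu)$ whose image under $\mathscr{K}^p$ is exactly $\sum_i w_i e_i$. To stay within the class of admissible driving processes, one can equivalently take $\nu = \sum_i \xi_i e_i$ with independent Gaussians $\xi_i \sim \mathcal{N}(0,\sigma_i^2)$ for any variances satisfying $\sum_i \sigma_i^2 < \infty$ (e.g., $\sigma_i^2 = \lambda_i^{\nu}$ for some eigenvalues summing finitely), which is a bona fide GP with trace-class covariance kernel $\kappa_\nu(s,t) = \sum_i \sigma_i^2 e_i(s) e_i(t)$, and the realized function space is precisely $\{\sum_i w_i e_i : \sum_i w_i^2/\lambda_i^{2p} < \infty\}$.

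The main potential obstacle is bookkeeping the measurability and a.s. convergence issues hidden in "realizable": strictly speaking, one is identifying sample paths of a stochastic process with elements of a (weighted) sequence space, which requires care that $\sum_i \varphi_i^2 < \infty$ holds pointwise in $\omega$ (not just in expectation) and that the series $\sum_i \lambda_i^p \varphi_i e_i$ converges in $\mathcal{L}^2(\mathcal{X})$ almost surely. This follows from Fubini (applied to $\mathbb{E}\|\nu\|_{\mathcal{L}^2}^2 < \infty$) so the set on which the expansion diverges is a null set, after which the coordinate argument above proceeds purely deterministically. Everything else is routine Hilbert-space algebra once Mercer provides the basis.
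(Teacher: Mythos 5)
Your proposal is correct and follows essentially the same route as the paper: expand in the Mercer eigenbasis so that $\mathscr{K}^p$ acts as coordinatewise multiplication by $\lambda_i^p$, and use Parseval to translate $\nu\in\mathcal{L}^2$ into the constraint $\sum_i w_i^2/\lambda_i^{2p}<\infty$. The only difference is that the paper derives the statement as the $q=0$ case of a slightly more general result (\cref{prop:func}, with $\nu$ ranging over $\mathcal{H}_{\kappa_q}$), whereas you argue the $\mathcal{L}^2$ case directly and spell out both inclusions and the almost-sure convergence bookkeeping more explicitly.
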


\cite{Pillai07} and \cite{Liang07} proposed the following class of functions
$$f\left(x\right) = \int_{\mathcal{X}} \kappa\left(x,z\right) \nu\left(z\right) d \mu \left(z\right) = \sum_{i=1}^\infty \lambda_i e_i\left(x\right) \int_{\mathcal{X}} \nu\left(z\right) e_i\left(z\right) d \mu \left(z\right)$$ which corresponds to the IGP with $p=1$. From \cref{prop:func-inf}, the IGP can realize an expanded space of functions setting $p<1$. This also holds when the Hilbert space of $\nu\left(\cdot\right)$ is an RKHS subspace of the $\mathcal{L}^2$ (see \cref{sec:GP-func}).

While a smaller $p$ allows the IGP to realize more functions, the sample path can potentially go beyond $\mathcal{H}_\kappa$. \cref{thm:IGP-sample} provides a sufficient condition for the IGP sample paths to be confined within $\mathcal{H}_\kappa$, requiring the eigenvalue of $\kappa$ to decay rapidly. We demonstrate this condition is satisfied for several popular classes of kernels including radial kernels.

\begin{theorem}
\label{thm:IGP-sample-inf}
Let $f \in \mathcal{L}^2\left(\mathcal{X}\times\Omega,\mathcal{F},P\right)$ be an IGP. Then, it holds that $P\left(f\left(\cdot\right) \in \mathcal{H}_{\kappa}\right) = 1$ if the kernel $\kappa_{2p - 1}$ is of trace class.
\end{theorem}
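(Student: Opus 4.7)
The plan is to reduce the event $\{f \in \mathcal{H}_\kappa\}$ to a summability condition on the Mercer coefficients of $f$, and then establish almost-sure summability via a Tonelli calculation. By the Mercer characterization of the RKHS (cf.\ \cref{sec:rkhs}), $\mathcal{H}_\kappa = \{\sum_i w_i e_i : \sum_i w_i^2/\lambda_i < \infty\}$. Since an IGP sample path takes the form $f(x) = \sum_i \lambda_i^p \varphi_i e_i(x)$ by \eqref{eq:IGP-def}, membership $f \in \mathcal{H}_\kappa$ is equivalent to $\sum_i \lambda_i^{2p-1} \varphi_i^2 < \infty$. It therefore suffices to show that this random series is almost surely finite.

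First, I would bound $\mathbb{E}[\varphi_i^2]$ uniformly in $i$. Since $\varphi_i = \int \nu(z) e_i(z)\, d\mu(z)$ is a centered Gaussian random variable obtained as a bounded linear functional of the GP $\nu$, a Fubini computation against its covariance $\kappa_\nu$ gives
\begin{equation*}
\mathbb{E}[\varphi_i^2] = \iint \kappa_\nu(s,t)\, e_i(s) e_i(t)\, d\mu(s)\, d\mu(t) = \langle e_i,\, \mathscr{K}_\nu e_i\rangle_{\mathcal{L}^2} \;\leq\; \|\mathscr{K}_\nu\|_{\mathrm{op}} \;<\; \infty,
\end{equation*}
where $\mathscr{K}_\nu$ denotes the integral operator with kernel $\kappa_\nu$; the bound uses $\|e_i\|_{\mathcal{L}^2}=1$ and the boundedness of $\mathscr{K}_\nu$, which follows from $\kappa_\nu$ being a continuous trace-class covariance kernel.

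Applying Tonelli to exchange sum and expectation then yields
\begin{equation*}
\mathbb{E}\!\left[\sum_{i=1}^\infty \lambda_i^{2p-1} \varphi_i^2\right] \;=\; \sum_{i=1}^\infty \lambda_i^{2p-1}\, \mathbb{E}[\varphi_i^2] \;\leq\; \|\mathscr{K}_\nu\|_{\mathrm{op}} \sum_{i=1}^\infty \lambda_i^{2p-1}.
\end{equation*}
The hypothesis that $\kappa_{2p-1}$ is of trace class is precisely the statement $\sum_i \lambda_i^{2p-1} < \infty$, so the right-hand side is finite. A non-negative random variable with finite expectation is finite almost surely, so $\sum_i \lambda_i^{2p-1} \varphi_i^2 < \infty$ a.s., giving $P(f \in \mathcal{H}_\kappa) = 1$.

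The main obstacle I anticipate is the technical task of justifying the Fubini step that identifies $\mathbb{E}[\varphi_i^2]$ with $\langle e_i, \mathscr{K}_\nu e_i\rangle$: specifically, confirming that $\nu$ lies in $\mathcal{L}^2(\mathcal{X},\mu)$ almost surely and that $\varphi_i$ is well-defined as a Gaussian random variable with the expected covariance structure. Both should follow from the standing hypothesis $f \in \mathcal{L}^2(\mathcal{X}\times\Omega,\mathcal{F},P)$ together with the assumed continuity and trace-class properties of $\kappa$ and $\kappa_\nu$, after which the argument above goes through directly.
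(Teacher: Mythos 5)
Your proof is correct, but it takes a genuinely different and more elementary route than the paper. The paper first shows $\mathbb{E}f \in \mathcal{H}_\kappa$ separately, then dominates the covariance $\kappa_{\mathcal{M}}$ by $\kappa_{\mathcal{M}}^\prime = \lambda_1^\prime \kappa_{2p}$ (where $\lambda_1^\prime$ is the top eigenvalue of the operator of $\kappa_\nu$) to get the RKHS inclusion $\mathcal{H}_{\kappa_{\mathcal{M}}} \subset \mathcal{H}_{\kappa_{\mathcal{M}}^\prime}$ via Aronszajn's criterion, and finally invokes the Luki\'{c}--Beder nuclear dominance theorem (\cref{lem:suff}), computing the dominance operator explicitly and checking that its trace is $\lambda_1^\prime \sum_i \lambda_i^{2p-1} < \infty$. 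You instead compute the first moment of the squared RKHS norm directly: the Mercer reduction $f \in \mathcal{H}_\kappa \iff \sum_i \lambda_i^{2p-1}\varphi_i^2 < \infty$, the uniform bound $\mathbb{E}[\varphi_i^2] \le \|\mathscr{K}_\nu\|_{\mathrm{op}} = \lambda_1^\prime$, and Tonelli. This avoids both auxiliary lemmas entirely, makes measurability of the event transparent, and actually proves the stronger statement $\mathbb{E}\|f\|_{\mathcal{H}_\kappa}^2 < \infty$; the interior bound $\lambda_1^\prime\sum_i\lambda_i^{2p-1}$ is the same quantity the paper obtains as the trace of its dominance operator. What the paper's machinery buys in exchange is contact with the necessary-and-sufficient characterization (useful if one also wanted a converse) and an explicit treatment of the mean. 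On that last point, note one small correction to your write-up: the paper does not assume $\nu$ is centered, so $\varphi_i$ need not be a centered Gaussian and $\mathbb{E}[\varphi_i^2]$ is the second moment, not the variance. Either split off $\mathbb{E}f$ first as the paper does, or observe that $\mathbb{E}[\varphi_i^2] = \mathbb{E}\left<\nu,e_i\right>_{\mathcal{L}^2}^2 \le \mathbb{E}\|\nu\|_{\mathcal{L}^2}^2 < \infty$ by Cauchy--Schwarz, which bounds the second moment uniformly without any centering assumption and also disposes of your anticipated Fubini obstacle.
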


From \cref{thm:IGP-sample-inf}, sample paths of the IGP with $p=1$ are confined within $\mathcal{H}_\kappa$ almost surely by our assumption that $\kappa$ is of trace class. This corresponds to the case where the function space of IGP is minimized from \cref{prop:func-inf}. For several popular classes of kernel operators, it is possible for the IGP to realize a larger space of functions while still having its sample paths confined in $\mathcal{H}_\kappa$. Below, we give two examples. 

\paragraph{Radial Kernels} For any $p>1/2$, the condition of \cref{thm:IGP-sample-inf} is satisfied for radial kernels of which the eigenvalue decays nearly exponentially \citep{Belkin18}. For example, consider $\kappa\left(x,z\right) = \exp\left(-\frac{\left\|x - z\right\|^2}{2 \ell^2}\right)$ and $\mu\left(x\right)$ is Gaussian, it has been shown
\begin{gather*}
\lambda_j \propto b^j, \quad b < 1\\
e_j\left(x\right) \propto \exp\left(-\left(c-a\right) x^2\right) h_j\left(x\sqrt{2 c}\right),
\end{gather*}
where $a,b,c$ are functions of $\ell$, and $h_j$ is the $j$-th order Hermite polynomial \citep{Zhu98}.

\paragraph{Brownian Bridge}
Suppose that $\mathcal{X} = \left[0,1\right]$ and $\mu$ is the Lebesgue measure, the Brownian bridge kernel reads $\kappa\left(x,z\right) = \min\left(x,z\right) - x z$ \citep{Rogers00}. The corresponding eigenvalues and eigenfunctions are given by
\begin{align*}
\lambda_j = \frac{1}{\pi^2 j^2}, \quad
e_j\left(x\right) = \sqrt{2}\sin\left(j \pi x\right).
\end{align*}
Thus, the condition of \cref{thm:IGP-sample-inf} holds for $p > 3/4$.

\subsection{A Finite-Sample Variant}
\label{sec:digp}

To obtain practical algorithms, we also present a finite-sample variant of the IGP (SIGP) on a sample $\left\{x_i\right\}_{i=1}^n$. The function space of the SIGP will be specified by the Hilbert closure $\mathcal{H}_{\kappa,n}$ of the linear subspace
\begin{align*}
    \left\{\sum_{i=1}^n a_i \kappa\left(\cdot,x_i\right) \;\Big|\; \left\{a_i\right\}_{i=1}^n \subset \mathbb{R}, \left\{x_i\right\}_{i=1}^n \subset \mathcal{X}\right\}.
\end{align*}
Suppose that $x_i$'s in the sample $\left\{x_i\right\}_{i=1}^n$ are i.i.d.\  and $\mu$ is a probability measure without loss of generality. The SIGP is defined as
\begin{align}
\label{eq:SIGP-def}
f_n\left(\cdot\right) \coloneqq \frac{1}{n} \sum_{i=1}^n \nu\left(x_i\right) \kappa_p\left(\cdot,x_i\right),
\end{align}
which converges to the IGP at a rate $O_P\left(n^{-1/2}\right)$ by the Central Limit Theorem for Hilbert spaces \citep{Ledoux91,Berlinet03}. Compared to the standard function-space view of GPs (see \citealp{Williams97} and \cref{sec:GP-func}), the SIGP representation provides a dual construction of GPs where the Gaussian coefficients are assigned to the representer $\kappa_p\left(\cdot,x_i\right)$ of each data point $x_i$ as opposed to each eigenfunction $e_i\left(\cdot\right)$ in the function-space view of GPs.

The SIGP can be useful for two reasons. First, the dual construction of GPs offered by the SIGP affords natural approximations and thereby fast computation, as will be discussed in \cref{sec:approx}. Second, the SIGP function space $\mathcal{H}_{\kappa,n}$ contains ``interesting" functions including the solution to Tikhonov regularization problems by the representer theorem:
\begin{align}
\label{eq:reg-obj}
\min_{f \in \mathcal{H}_{\kappa}} L\left(f,\mathcal{D}\right) + \rho\left(\left\|f\right\|_{\mathcal{H}_{\kappa}}\right)
\end{align}
with a convex loss function $L$ and a monotonically increasing function $\rho$, which enjoys wide application in machine learning \citep{Cucker02,Hofmann08}. This also includes Bayesian kernel methods based on placing a prior on $\mathcal{H}_{\kappa,n}$ \citep{Tipping01,Sollich02,Chakraborty12}. These approaches implicitly assume that the prior distribution is supported on the finite-dimensional subspace $\mathcal{H}_{\kappa,n}$ (see also \citealp{Dunson10} for details).

We point out that the finite-sample variant can be reduced to the setting $p=1$ since $\kappa_p$ and $\kappa$ have the same eigenfunctions. This fact leads to simplified expressions and fast computation. Specifically, let $\bm{K} = \bm{U}\bm{D}\bm{U}^\top$ be the eigenvalue decomposition, we can rewrite $$\kappa_{\mathcal{M}}\left(x_i,x_j\right) = n^{-2p}\kappa\left(x_i,\bm{X}\right)\bm{U}\bm{D}^{p-1} \bm{U}^\top \bm{K}_{\nu} \bm{U} \bm{D}^{p-1}\bm{U}^\top\kappa\left(\bm{X},x_j\right).$$ This is always possible because $\bm{K}$ is positive definite and hence $\bm{D}$ is invertible. For a centered $f_n\left(\cdot\right)$ on the sample, it can be expressed as a linear combination of the centered representers $\bar{\kappa}\left(\cdot,x_i\right) \coloneqq \kappa\left(\cdot,x_i\right) - \sum_{j=1}^n \kappa\left(\cdot,x_j\right)/n$. Denote by $\bm{\Gamma}_n \coloneqq \bm{I}_n - \bm{1}_n\bm{1}_n^\top/n$, we have $\bar{\kappa}\left(\cdot,\bm{X}\right) = \kappa\left(\cdot,\bm{X}\right) \bm{\Gamma}_n$. Using the parameterization $\bm{\Sigma} \coloneqq n^{-2p}\bm{U}\bm{D}^{p-1} \bm{U}^\top \bm{K}_{\nu} \bm{U} \bm{D}^{p-1}\bm{U}^\top$, the centered SIGP is simplified to 
\begin{align}
\label{eq:SIGP}
f_n\left(\cdot\right) \sim \mathcal{GP}\left(\bm{0}, \kappa\left(\cdot,\bm{X}\right)\bm{\Gamma}_n\bm{\Sigma}\bm{\Gamma}_n \kappa\left(\bm{X},\cdot\right)\right),
\end{align}
where the matrix $\bm{\Sigma}$ is the parameter to be estimated.

\subsection{Approximation with Sufficient Dimension Reduction Subspaces}
\label{sec:approx}

Recall that the SIGP is supported on $\mathcal{H}_{\kappa,n}$, we may intuitively adopt supervised dimension reduction of $\mathcal{H}_{\kappa,n}$ to speedup the computation as well as reduce the variance in the prediction. A natural choice for this task is the sufficient dimension reduction (SDR) which is a state-of-the-art approach to supervised dimension reduction \citep{Li91,Cook91,Cook98,Fukumizu04,Fukumizu09,Wu09,Cook09}. Below, we first briefly review the basics of the SDR.

\paragraph{Background on SDR} In a regression problem with response $Y$ and covariate $X$, the SDR aims to find a subspace $\mathcal{S}$ such that the projection of $X$ onto $\mathcal{S}$ captures the statistical dependency of $Y$ on $X$. The SDR is typically stated as the conditional independence 
\begin{align}
\label{eq:sdr-dr}
Y \ci X \mid \mathscr{P}_{\mathcal{S}} X \quad \text{or} \quad P\left(Y\mid X\right) = P\left(Y \mid \mathscr{P}_{\mathcal{S}} X\right)
\end{align}
where $\mathscr{P}_{\mathcal{S}}$ denotes the orthogonal projection onto $\mathcal{S}$. The SDR can be viewed as the supervised version of principle component analysis that takes into account the information of the response.

Suppose that $X \in \mathbb{R}^d$ and $\bm{S} \in \mathbb{R}^{d\times m}$, and let $\bm{S}_1^\top X,\bm{S}_2^\top X,\cdots,\bm{S}_m^\top X$ be the projection onto $\mathcal{S}$. The SDR typically assumes that $X$ satisfies \cref{cond:ldc}, a property characterizing elliptically symmetric distributions such as the normal distribution \citep{Eaton83}.
\begin{cond}
\label{cond:ldc}
For any $\bm{a}\in \mathbb{R}^d$, there exists $c_0 \in \mathbb{R}$ and $\bm{c}\in \mathbb{R}^m$ such that
$\mathbb{E}\left(\bm{a}^\top X \mid \bm{S}^\top X\right) = c_0 + \bm{c}^\top \bm{S}^\top X$.
\end{cond}
This condition leads to an important result stated in \cref{thm:sir} which gives an algorithm for computing $\bm{S}$ for an {\em arbitrary unknown} link function $f$ in \eqref{eq:lr}.
\begin{theorem}[\citealp{Li91}]
\label{thm:sir}
Under assumption \eqref{eq:sdr-dr} and \cref{cond:ldc}, it holds that $\mathbb{E}\left(X\mid Y\right) - \mathbb{E} X \in \text{span}\left(\text{Var}\left(X\right)\bm{S}\right)$.
\end{theorem}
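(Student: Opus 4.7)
The plan is in three steps: first, reduce $\mathbb{E}(X\mid Y)$ to a conditional expectation given $\bm{S}^\top X$ via the SDR assumption; second, use Condition~\ref{cond:ldc} to express that conditional expectation as a specific linear function of $\bm{S}^\top X$ whose image lies in $\operatorname{span}(\operatorname{Var}(X)\bm{S})$; third, take the outer expectation given $Y$ and read off the conclusion.

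Without loss of generality I would center so that $\mathbb{E} X = \bm{0}$ (the theorem is stated in terms of $\mathbb{E}(X\mid Y) - \mathbb{E} X$, so centering is invariant to the conclusion). From $Y \ci X \mid \bm{S}^\top X$ and the tower property,
$$\mathbb{E}(X\mid Y) \;=\; \mathbb{E}\bigl[\mathbb{E}(X \mid \bm{S}^\top X, Y)\,\big|\, Y\bigr] \;=\; \mathbb{E}\bigl[\mathbb{E}(X\mid \bm{S}^\top X)\,\big|\, Y\bigr].$$
Next, for any $\bm{a}\in\mathbb{R}^d$, Condition~\ref{cond:ldc} yields $\mathbb{E}(\bm{a}^\top X\mid \bm{S}^\top X) = \bm{c}(\bm{a})^\top \bm{S}^\top X$ (the intercept vanishes by centering). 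Multiplying both sides by $\bm{S}^\top X$ and taking expectations gives $\bm{a}^\top \operatorname{Var}(X)\bm{S} = \bm{c}(\bm{a})^\top \bm{S}^\top \operatorname{Var}(X)\bm{S}$, so
$$\bm{c}(\bm{a}) \;=\; \bigl(\bm{S}^\top \operatorname{Var}(X)\bm{S}\bigr)^{-1}\bm{S}^\top \operatorname{Var}(X)\,\bm{a},$$
where I may assume $\bm{S}$ has full column rank so that the Gram matrix is invertible. Since $\bm{a}$ is arbitrary, $\mathbb{E}(X\mid \bm{S}^\top X) = \operatorname{Var}(X)\bm{S}\bigl(\bm{S}^\top\operatorname{Var}(X)\bm{S}\bigr)^{-1}\bm{S}^\top X$, whose image is contained in $\operatorname{span}(\operatorname{Var}(X)\bm{S})$.

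Substituting this into the identity from the first step, I conclude
$$\mathbb{E}(X\mid Y) \;=\; \operatorname{Var}(X)\bm{S}\bigl(\bm{S}^\top\operatorname{Var}(X)\bm{S}\bigr)^{-1}\bm{S}^\top \mathbb{E}(X\mid Y) \;\in\; \operatorname{span}(\operatorname{Var}(X)\bm{S}),$$
and uncentering gives the stated containment $\mathbb{E}(X\mid Y) - \mathbb{E} X \in \operatorname{span}(\operatorname{Var}(X)\bm{S})$.

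I expect the only genuinely content-bearing step to be the second-moment identification of $\bm{c}(\bm{a})$: this is the unique place where Condition~\ref{cond:ldc} is actually used, and it is the essence of why Sliced Inverse Regression recovers the SDR subspace through inverse conditional moments. A minor technical point is the invertibility of $\bm{S}^\top\operatorname{Var}(X)\bm{S}$, which can be arranged by taking $\bm{S}$ to have full column rank and $\operatorname{Var}(X)$ positive definite on the relevant subspace; this is not a serious obstacle.
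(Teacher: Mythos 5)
Your proof is correct, and it is essentially the canonical argument from Li (1991): the paper itself states \cref{thm:sir} as an imported result with no proof, so there is nothing internal to compare against. Your three steps --- using $Y \ci X \mid \bm{S}^\top X$ to replace $\mathbb{E}(X\mid \bm{S}^\top X, Y)$ by $\mathbb{E}(X\mid \bm{S}^\top X)$, identifying the affine coefficients in \cref{cond:ldc} via the normal equations $\bm{a}^\top \operatorname{Var}(X)\bm{S} = \bm{c}(\bm{a})^\top \bm{S}^\top \operatorname{Var}(X)\bm{S}$, and pushing the resulting projection formula through the outer conditional expectation --- are exactly the right ones, and the invertibility caveat you flag is handled the same way in the original source (full column rank of $\bm{S}$ plus nondegeneracy of $\operatorname{Var}(X)$, or a pseudo-inverse otherwise, since only the span matters).
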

\cref{thm:sir} implies that $\bm{S}$ is given by the leading eigenvector of the generalized eigenvalue decomposition $\text{Var}\left(\mathbb{E}\left(Y\mid X\right)\right) \bm{S} = \text{Var}\left(X\right) \bm{S} \bm{\Lambda}$, where $\bm{\Lambda}$ is the diagonal eigenvalue matrix. We will derive an RKHS variant of the theorem in \cref{sec:sdr-rkhs}, and show that the SIGP distribution satisfies the SDR assumptions. The intuition is that the analytical expression of the SIGP \eqref{eq:SIGP-def} is a linear combination of Gaussian random functions $\kappa_p \left(\cdot,x_i\right)$ which can be considered as the covariates in $\mathcal{H}_{\kappa,n}$.

\paragraph{SDR Approximation to SIGP} We highlight two key results on the SDR approximation. First, the SIGP can be well-approximated with a low-rank SDR subspace when the eigenvalue of $\kappa$ decays at a faster rate than $O\left(1/n\right)$. This is consistent with the observation that many widely-used kernel operators have nearly exponential eigenvalue decay \citep{Belkin18}. Second, we derive a log-likelihood 
\begin{align}
g\left(\bm{W}\right) = - \frac{n}{2} \log \frac{\det \left( \bm{W}^\top \bm{M} \bm{W} \right)}
{\det \left(\bm{W}^\top \bm{N} \bm{W}\right)}
\label{eq:lik-rkhs-inf}
\end{align}
for the SDR subspace of $\mathcal{H}_{\kappa,n}$ spanned by the basis
\begin{align}
\label{eq:basis}
\sum_{i=1}^n W_{i1} \kappa\left(\cdot,x_i\right), \sum_{i=1}^n W_{i2}\kappa\left(\cdot,x_i\right), \cdots, \sum_{i=1}^n W_{im}\kappa\left(\cdot,x_i\right),
\end{align}
where $\bm{W}$ specifies the coefficients of the basis functions of the SDR subspace, and $\bm{M}$ and $\bm{N}$ are computed from the kernel matrix $\bm{K}$. The log-likelihood \eqref{eq:lik-rkhs-inf}
extends the classic likelihood for SDR subspaces of $\mathbb{R}^d$ \citep{Cook09} to the RKHS setting. These results will be proved in \cref{sec:theory}.

The approximation to the centered SIGP \eqref{eq:SIGP} using basis \eqref{eq:basis} is given by a random function $f_n \approx n^{-1} \sum_{j=1}^m \beta_j \sum_{i=1}^n \bm{W}_{ij} \kappa\left(\cdot,x_i\right)$ with $\bm{\beta} \sim \mathcal{N}\left(\bm{0},\bm{\Sigma}_\beta\right)$, and the reproducing property states
\begin{align*}
f_n\left(z\right) \approx \left<\bar{\kappa}\left(\cdot,z\right),\sum_{j=1}^m \beta_j \sum_{i=1}^n \bm{W}_{ij} \kappa\left(\cdot,x_i\right)\right>_{\mathcal{H}_\kappa}
= \left(\kappa\left(z,\bm{X}\right) - \frac{1}{n}\bm{1}\bm{1}_n^\top\bm{K}\right)\bm{W}\bm{\beta}.
\end{align*}
With this approximation, we will instead estimate the parameters of the finite $\bm{\beta}$ in the approximated SIGP. It is worth pointing out that $\bm{\Pi}\left(z\right) \coloneqq \left(\kappa\left(z,\bm{X}\right) - \bm{1}\bm{1}_n^\top\bm{K}/n\right)\bm{W}$ is the evaluation of the projection of the centered SIGP at $z$. The approximate SIGP distribution is therefore
\begin{align}
\label{eq:apx-SIGP}
f_n\left(\cdot\right) \sim \mathcal{G P}\left(\bm{0},
\bm{\Pi}\left(\cdot\right)\bm{\Sigma}_\beta \bm{\Pi}\left(\cdot\right)^\top\right).
\end{align}
In the subsequent sections, we will assume that the approximate SIGP \eqref{eq:apx-SIGP} is used.

\subsection{The Predictive Distribution}
\label{sec:predict}

Let us first consider the predictive distribution of the vanilla SIGP \eqref{eq:SIGP-def}. The posterior induced by the model \eqref{eq:lr} with the SIGP \eqref{eq:SIGP-def} is given by
\begin{align}
\label{eq:posterior}
\gamma\left(f_n \mid \mathcal{D}\right) \propto \frac{\gamma\left(f_n; \kappa_p,\kappa_{\nu}\right)}{\left(2\pi\right)^{n/2} \sigma^n} \prod_{i=1}^n  \exp\left[-\frac{1}{2\sigma^2}\left(y_i - f_n\left(x_i\right)\right)^2\right],
\end{align}
where $\gamma\left(f_n; \kappa_p,\kappa_{\nu}\right)$ denotes the SIGP prior on a sample $\left\{x_i\right\}_{i=1}^n$. Thus, it remains to obtain the analytical expression of $\gamma\left(f_n; \kappa_p,\kappa_{\nu}\right)$. Similar to \cref{prop:sample-cov}, we have the covariance $\text{Cov}\left(f_n\left(x_i\right),f_n\left(x_j\right)\right) = n^{-2p} \left[\bm{K}^p\bm{K}_\nu\bm{K}^p\right]_{ij}$ where $\left[\bm{K}_{\nu}\right]_{i j} \coloneqq \kappa_{\nu}\left(x_i,x_j\right)$ is the covariance matrix of $\nu\left(\cdot\right)$. The SIGP prior is then expressed as the density of the multivariate Gaussian distribution $\gamma\left(f_n; \kappa_p,\kappa_{\nu}\right) = \mathcal{N}\left(\bm{f} \mid \bm{0}, n^{-2p} \bm{K}^p\bm{K}_\nu\bm{K}^p\right)$. Denote by $\bm{f} \coloneqq \left(f\left(x_1\right), f\left(x_2\right),\cdots,f\left(x_n\right)\right)^\top$ a column vector. The reproducing property yields $\bm{f} = \bm{K}\bm{\alpha}$ for some $\bm{\alpha} \in \mathbb{R}^n$. Substitute $\bm{f} = \bm{K}\bm{\alpha}$ into the SIGP prior density, we arrive at
\begin{align}
\label{eq:IGP-prior}
\gamma\left(\bm{\alpha}; \kappa_p,\kappa_{\nu}\right)
= \mathcal{N}\left(\bm{\alpha} \;\Big|\; \bm{0}, n^{-2p} \bm{K}^{p-1}\bm{K}_{\nu}\bm{K}^{p-1}\right).
\end{align}
Combined with \eqref{eq:posterior}, one can select the kernel $\bm{K}_\nu$ using the maximum marginal likelihood method as in the standard GP \citep{Rasmussen06}.

Now consider the the approximate SIGP \eqref{eq:apx-SIGP}. Denote by $\bm{X}_T$ and $\bm{y}_T$ the covariate matrix and response vector on the test set, respectively. The predictive distribution of $\bm{y}_T$ is a multivariate Gaussian, and we have the function covariance
\begin{align*}
\text{Cov}\left(f_n\left(\bm{X}_T\right),f_n\left(\bm{X}\right)\right) = \bm{\Pi}\left(\bm{X}_T\right) \bm{\Sigma}_\beta \bm{\Pi}\left(\bm{X}\right)^\top.
\end{align*}
Denote by $\bm{\Sigma}_{Z Y} \coloneqq \text{Cov}\left(f_n\left(\bm{Z}\right),f_n\left(\bm{Y}\right)\right)$ the function covariance matrix, the mean of the predictive SIGP distribution is given by
\begin{align*}
\mathbb{E} \left(\bm{y}_T \mid \mathcal{D}\right)
&= u\left(\bm{X}_T\right) + \bm{\Sigma}_{X_T X}\bm{V}^{-1}\left(\bm{y} - u\left(\bm{X}\right)\right)\\
&= u\left(\bm{X}_T\right) + \bm{\Pi}\left(\bm{X}_T\right)\widehat{\bm{\beta}},
\end{align*}
where $u\left(\bm{X}_T\right)$ is the mean function, and $\bm{V} \coloneqq \bm{\Sigma}_{XX} + \sigma^2\bm{I}$ represents the marginal variance of the SIGP.
 
Let $\bm{P}_{Z} = \bm{\Pi}\left(\bm{Z}\right)\bm{\Sigma}_\beta^{1/2}$ such that $\bm{\Sigma}_{Z Y} = \bm{P}_{Z}\bm{P}_{Y}^\top$. The variance of the predictive distribution can be written
\begin{align}
\label{eq:pred-var}
\begin{split}
\text{Var}\left(\bm{y}_T \mid \mathcal{D}\right) 
&= \bm{\Sigma}_{X_T X_T} - \bm{\Sigma}_{X_T X} \bm{V}^{-1} \bm{\Sigma}_{X X_T} + \sigma^2\bm{I}\\
&= \bm{P}_{X_T} \bm{P}_{X_T}^\top - \bm{P}_{X_T} \bm{P}_X^\top \left(\bm{P}_X \bm{P}_X^\top + \sigma^2 \bm{I}\right)^{-1} \bm{P}_X \bm{P}_{X_T}^\top + \sigma^2\bm{I}\\
&= \bm{P}_{X_T} \bm{P}_{X_T}^\top - \bm{P}_{X_T}\left[\bm{I} - \left(\sigma^{-2}\bm{P}_X^\top \bm{P}_X + \bm{I}\right)^{-1}\right]\bm{P}_{X_T}^\top + \sigma^2\bm{I}\\
&= \bm{P}_{X_T}  \left(\sigma^{-2}\bm{P}_X^\top\bm{P}_X + \bm{I}\right)^{-1} \bm{P}_{X_T}^\top + \sigma^2\bm{I}\\
&= \bm{\Pi}\left(\bm{X}_T\right)\bm{\Delta}\bm{\Pi}\left(\bm{X}_T\right)^\top + \sigma^2\bm{I}
\end{split}
\end{align}
with $\bm{\Delta} \coloneqq \left(\bm{\Sigma}_\beta^{-1} + \sigma^{-2}\bm{\Pi}\left(\bm{X}\right)^\top\bm{\Pi}\left(\bm{X}\right)\right)^{-1} \in \mathbb{R}^{m\times m}$. We only need the diagonal elements of $\text{Var}\left(\bm{y}_T\mid \mathcal{D}\right)$, among which the $i$-th diagonal entry is given by $\left\|\left(\bm{\Pi}\left(\bm{X}_T\right)\bm{\Delta}^{1/2}\right)_{i:} \right\|^2  + \sigma^2$. These diagonal entries can be computed efficiently for low-rank SDR approximation with a small $m$.

\section{Theoretical Underpinnings and Results}
\label{sec:theory}

In this section, we begin by providing the background on the RKHS as well as the function space of GPs. Then, we present the proofs for the main results outlined in \cref{sec:func-spc} and \cref{sec:approx}. Finally, we derive a likelihood for the SDR approximation in the SIGP which can be of independent interest.

\subsection{Reproducing Kernel Hilbert Spaces and Gaussian Processes}
\label{sec:rkhs}

We review some relevant concepts of the RKHS. More comprehensive exposition of the theory of RKHS can be found in e.g., \citep{Aronszajn50,Parzen70,Cucker02,Berlinet03} and the references therein. An RKHS $\mathcal{H}$ is a Hilbert space of functions (on $\mathcal{X}$) with bounded evaluation functionals $\delta_x\left(f\right) \coloneqq f\left(x\right)$ for all $f \in \mathcal{H}$. Below, we recall three important properties of the RKHS.

First, the RKHS has the reproducing property. From Riesz representation theorem, the RKHS is ``{\em reproducing}"  in the sense that for every $x \in \mathcal{X}$, there exists a unique {\em representer} function $\phi\left(x\right) \in \mathcal{H}$ satisfying
\begin{align*}
\left(\forall f \in \mathcal{H}\right) \quad f\left(x\right) = \left<f,\phi\left(x\right)\right>_{\mathcal{H}},
\end{align*}
where $\left<\cdot,\cdot\right>_{\mathcal{H}}$ denotes the inner product in $\mathcal{H}$. Each input data $x \in \mathcal{X}$ is mapped to the corresponding representer function $\phi\left(x\right)$ via the map $\phi: \mathcal{X} \mapsto \mathcal{H}$ which is commonly referred to as the {\em feature map}.

Another important property of the RKHS is the one-to-one correspondence between the RKHS and the kernel. Apply the reproducing property to $\phi\left(x\right)$, one has
\begin{align*}
    \left[\phi\left(x\right)\right]\left(x^\prime\right) = \left<\phi\left(x^\prime\right) ,\phi\left(x\right) \right>_{\mathcal{H}} =
    \left<\phi\left(x\right) ,\phi\left(x^\prime\right) \right>_{\mathcal{H}} =
    \left[\phi\left(x^\prime\right)\right] \left(x\right).
\end{align*}
Suppose that $\mathcal{X}$ is a separable metric space, the {\em reproducing kernel} $\kappa: \mathcal{X}\times \mathcal{X} \mapsto \mathbb{R}$ of $\mathcal{H}$ is a continuous symmetric function given by 
\begin{align*}
\kappa\left(x,x^\prime\right) \coloneqq
\left<\phi\left(x\right),\phi\left(x^\prime\right)\right>_{\mathcal{H}}
= \left<\kappa\left(\cdot,x\right),\kappa\left(\cdot,x^\prime\right)\right>_{\mathcal{H}}.
\end{align*}
Note that $\kappa\left(\cdot,\cdot\right)$ is also positive semidefinite since for any countable subset $\left\{x_i\right\}$ dense in $\mathcal{X}$ and $\left\{a_i\right\} \subset \mathbb{R}$, we have $
\sum_{i j} a_i a_j \left<\phi\left(x_i\right),\phi\left(x_j\right)\right>_{\mathcal{H}}
= \left<\sum_{i=1}^\infty a_i \phi\left(x_i\right),\sum_{i=1}^\infty a_i \phi\left(x_i\right)\right>_{\mathcal{H}} \geq 0$.
This shows that for each RKHS $\mathcal{H}$, there is a unique symmetric positive semidefinite reproducing kernel $\kappa\left(\cdot,\cdot\right)$. The converse is also true --- the Hilbert closure of
\begin{align}
\label{eq:finite-rkhs}
\left\{\sum_{i=1}^n a_i \kappa\left(\cdot,x_i\right) \;\Big|\; n \in \mathbb{N}, \left\{a_i\right\}_{i=1}^n \subset \mathbb{R},\left\{x_i\right\}_{i=1}^n \subset \mathcal{X}\right\}
\end{align}
under the norm $\left\|\sum_{i=1}^\infty a_i \kappa\left(\cdot,x_i\right)\right\| = \sqrt{\sum_{i,j} a_i a_j \kappa\left(x_i,x_j\right)}$ is the unique RKHS $\mathcal{H}_{\kappa,n}$ of $\kappa\left(\cdot,\cdot\right)$ \citep{Aronszajn50}.

Last but not least, the RKHS can be related to an $\mathcal{L}^2\left(\mathcal{X},\mathcal{B},\mu\right)$ space $\mathcal{G}$ of measurable real-valued functions on $\mathcal{X}$ equipped with the inner product 
$\left<f,g\right>_{\mathcal{L}^2} \coloneqq \int_{\mathcal{X}} f\left(x\right) g\left(x\right) d\mu\left(x\right)$ and the generated norm $\left\|f\right\|_{\mathcal{L}^2} \coloneqq \left(\int_{\mathcal{X}} f\left(x\right) g\left(x\right) d\mu\left(x\right)\right)^{1/2}$. Suppose that the kernel $\kappa$ is Hilbert-Schmidt, i.e., $\int_{\mathcal{X}} \int_{\mathcal{X}} \kappa^2\left(x,z\right) d\mu\left(x\right) d\mu\left(z\right) < \infty$, the Mercer's theorem \citep{Mercer09} states
\begin{align}
\label{eq:mercer}
	\kappa\left(x,z\right) = \sum_{i=1}^\infty \lambda_i e_i\left(x\right) e_i\left(z\right),
\end{align}
where convergence is absolute and uniform for a non-increasing sequence of eigenvalues $\left\{\lambda_i\right\}$ and the corresponding eigenfunctions $\left\{e_i\right\}$. These eigenfunctions are orthonormal and satisfy $\int_{\mathcal{X}} \kappa\left(x,z\right) e_i\left(z\right) d\mu\left(z\right) = \lambda_i e_i\left(x\right)$. From \eqref{eq:mercer}, there is an isomorphism between $\mathcal{H}$ and $\mathcal{G}$ under the linear map $\mathscr{I} \phi\left(x\right) \coloneqq \sum_{i=1}^\infty \sqrt{\lambda_i}e_i\left(x\right) \left(\sqrt{\lambda_i}e_i\right)$. Thus, $\mathcal{G}$ consists of functions $\sum_{i=1}^\infty w_i \sqrt{\lambda_i}e_i$ for $w_i \in \mathbb{R}$. In particular, one can compute the RKHS inner product using \eqref{eq:mercer} as $\left<\sum_{i=1}^\infty a_i \sqrt{\lambda_i}e_i, \sum_{i=1}^\infty b_i \sqrt{\lambda_i}e_i\right>_{\mathcal{H}} = \sum_{i=1}^\infty a_i b_i$. For any function $\sum_{i=1}^\infty w_i \sqrt{\lambda_i}e_i$ in $\mathcal{H}$, it holds that $\sum_{i=1}^\infty w_i^2 < \infty$. This yields the following Mercer's representation of $\mathcal{H}$:
\begin{align}
\label{eq:mercer-rep}
\mathcal{G} = \left\{\sum_{i=1}^\infty w_i\sqrt{\lambda_i}e_i \;\Big|\; \sum_{i=1}^\infty w_i^2 < \infty \right\}.
\end{align}
Denote by $a_i \coloneqq \int_{\mathcal{X}} f\left(x\right) e_i\left(x\right) d\mu\left(x\right)$, the constraint $\sum_{i=1}^\infty w_i^2 < \infty$ is alternatively written as $\sum_{i=1}^\infty a_i^2/\lambda_i < \infty$. The Mercer's representation \eqref{eq:mercer-rep} establishes a duality between the RKHS $\mathcal{H}$ and the corresponding $\mathcal{L}^2$ space $\mathcal{G}$ in which GPs are typically defined.

\subsubsection{Reproducing Kernel Hilbert Space of a Gaussian process}
\label{sec:GP-func}

We now recall the function perspectives of GPs, focusing on its connection to the RKHS. Let $\left(\Omega,\mathcal{F},P\right)$ be a probability space, where $\Omega$ is a sample space, $\mathcal{F}$ is an appropriate $\sigma$-algebra on $\Omega$, and $P$ is a probability measure. Denote by $\mathcal{L}^2\left(\Omega,\mathcal{F},P\right)$ the Hilbert space of real-valued square integrable random variables on $\Omega$:
\begin{align*}
\mathcal{L}^2\left(\Omega,\mathcal{F},P\right) \coloneqq
\left\{
Z: \Omega \mapsto \mathbb{R} :
\int_{\Omega} \left|Z\left(\omega\right)\right|^2 d P\left(\omega\right) < \infty
\right\}.
\end{align*}
A GP $f \in \mathcal{L}^2\left(\mathcal{X}\times \Omega,\mathcal{F},P\right)$ is a family of Gaussian random variables $\left\{f\left(x\right): x\in \mathcal{X}\right\}$ indexed by $\mathcal{X}$. Fixing some $\omega \in \Omega$, $\tilde{f}\left(\cdot\right) \coloneqq f\left(\cdot,\omega\right)$ is called a realization or a sample path of the GP. We will refer to the smallest Hilbert subspace of $\mathcal{L}^2\left(\Omega,\mathcal{F},P\right)$ that contains all sample paths as the Hilbert space spanned by the GP $f$, or the {\em Hilbert space of $f$}. The Hilbert space of $f$ is related to an RKHS through the following Lo\`{e}ve's lemma.

\begin{lemma}[\citealp{Loeve48}]
Let $\left\{f\left(x\right): x \in \mathcal{X}\right\}$ be a centered second-order process with covariance $k_f\left(x,z\right) \coloneqq \mathbb{E} \left(f\left(x\right) f\left(z\right)\right)$. Then, the Hilbert space $\mathcal{H}_f$ of $f$ is congruent to the RKHS $\mathcal{H}_{k_f}$ under the isometry $\mathscr{L}: \mathcal{H}_f \mapsto \mathcal{H}_{k_f}$ given by
\begin{align*}
\left(\mathscr{L} g\right)\left(x\right) \coloneqq \mathbb{E}\left(g f\left(x\right)\right), \quad g \in \mathcal{H}_f,\; x \in \mathcal{X}.
\end{align*}
\end{lemma}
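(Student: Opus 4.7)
The plan is to build the isometry $\mathscr{L}$ first on the natural generating set $\{f(x) : x \in \mathcal{X}\}$ of $\mathcal{H}_f$, then extend by density and continuity, and finally check that the extended map is given by the stated formula $(\mathscr{L} g)(x) = \mathbb{E}(g f(x))$. Recall that $\mathcal{H}_f$ is by definition the closed linear span in $\mathcal{L}^2(\Omega,\mathcal{F},P)$ of $\{f(x) : x \in \mathcal{X}\}$, while $\mathcal{H}_{k_f}$ is, by the Aronszajn construction recalled in \eqref{eq:finite-rkhs}, the completion of the linear span of $\{k_f(\cdot,x) : x \in \mathcal{X}\}$ under the reproducing inner product.

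First I would define, on a generator, $\mathscr{L}(f(x)) := k_f(\cdot,x)$, and extend linearly to finite sums. For $g = \sum_{i=1}^n a_i f(x_i)$ this gives $\mathscr{L} g = \sum_{i=1}^n a_i k_f(\cdot,x_i)$, and a direct calculation yields
\begin{align*}
\|g\|_{\mathcal{L}^2}^2
= \sum_{i,j} a_i a_j \,\mathbb{E}\bigl(f(x_i) f(x_j)\bigr)
= \sum_{i,j} a_i a_j\, k_f(x_i,x_j)
= \Bigl\|\sum_{i=1}^n a_i k_f(\cdot,x_i)\Bigr\|_{\mathcal{H}_{k_f}}^2,
\end{align*}
using the reproducing property $\langle k_f(\cdot,x_i), k_f(\cdot,x_j)\rangle_{\mathcal{H}_{k_f}} = k_f(x_i,x_j)$. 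So $\mathscr{L}$ is a well-defined linear isometry on a dense subspace of $\mathcal{H}_f$, and extends uniquely to an isometry $\mathscr{L}:\mathcal{H}_f \to \mathcal{H}_{k_f}$. Its image contains every $k_f(\cdot,x)$ and is closed, so it coincides with $\mathcal{H}_{k_f}$; hence $\mathscr{L}$ is a Hilbert space isomorphism (a congruence).

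Next I would verify the explicit formula $(\mathscr{L} g)(x) = \mathbb{E}(g\, f(x))$. On a generator $g = f(z)$ this reduces to $k_f(x,z) = \mathbb{E}(f(z) f(x))$, which holds by definition of $k_f$, so the formula is correct on the dense subspace above. To pass to the closure I use two continuity facts: the functional $g \mapsto \mathbb{E}(g f(x))$ is bounded on $\mathcal{L}^2(\Omega,\mathcal{F},P)$ by Cauchy--Schwarz (with norm $\|f(x)\|_{\mathcal{L}^2} = k_f(x,x)^{1/2}$), while point evaluation $h \mapsto h(x) = \langle h, k_f(\cdot,x)\rangle_{\mathcal{H}_{k_f}}$ is bounded on $\mathcal{H}_{k_f}$ by the RKHS property. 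Since both sides of $(\mathscr{L} g)(x) = \mathbb{E}(g f(x))$ are continuous in $g$ (with the isometry linking the two norms) and agree on a dense subspace, they agree on all of $\mathcal{H}_f$.

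I do not expect any deep obstacle: the argument is essentially the standard Aronszajn/Parzen correspondence. The only delicate points are (i) confirming that the image of the dense generating set under $\mathscr{L}$ is actually dense in $\mathcal{H}_{k_f}$, which follows from the Aronszajn characterization, and (ii) making sure the pointwise formula passes to limits, which is why I single out the two continuity estimates above. Everything else is a routine extension-by-continuity argument.
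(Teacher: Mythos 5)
The paper states this lemma as a cited classical result (Lo\`{e}ve/Parzen) and gives no proof of its own, so there is nothing to compare against; your argument is the standard congruence construction and it is correct. You define $\mathscr{L}$ on the generators $f(x)\mapsto k_f(\cdot,x)$, verify the isometry on finite linear combinations via $\langle k_f(\cdot,x_i),k_f(\cdot,x_j)\rangle_{\mathcal{H}_{k_f}}=k_f(x_i,x_j)$, extend by continuity, and then check the pointwise formula $(\mathscr{L}g)(x)=\mathbb{E}(g\,f(x))$ by combining the Cauchy--Schwarz bound on $g\mapsto\mathbb{E}(g\,f(x))$ with boundedness of evaluation in the RKHS; the only implicit step, well-definedness of $\mathscr{L}$ on the span when a combination $\sum_i a_i f(x_i)$ has more than one representation, is already subsumed by your isometry computation applied to the difference of two representations. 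This is exactly the proof one would expect, and the two continuity estimates you isolate are indeed the only points where care is needed.
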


The function space of GPs can be characterized via the duality specified by the Mercer's representation \eqref{eq:mercer-rep} which states that a function is contained in RKHS $\mathcal{H}$ if and only if it can be expressed as $f = \sum_{i=1}^\infty a_i e_i \in \mathcal{G}$ for some $\left\{a_i\right\}$ satisfying $\sum_{i=1}^\infty a_i^2/\lambda_i < \infty$. Suppose that $\left\{f\left(z\right) \mid z \in \mathcal{X}\right\}$ is a zero-mean GP with Hilbert-Schmidt covariance kernel $\kappa\left(x,z\right) \coloneqq \mathbb{E}\left(f\left(x\right) f\left(z\right)\right)$. Then it follows from \eqref{eq:mercer} that $\mathbb{E} a_i^2 = \lambda_i$. This is formalized by the Karhunen-Lo\`{e}ve expansion \citep{Ghanem91} which states that $f\left(x\right)$ has the following expansion:
\begin{align}
\label{eq:KLE}
f\left(x\right) = \sum_{i=1}^\infty \alpha_i e_i\left(x\right) \in \mathcal{L}^2\left(\Omega,\mathcal{F},P\right)
, \qquad  \alpha_i \coloneqq \int_{\mathcal{X}} f\left(x\right) e_i\left(x\right) d P\left(x\right)
\end{align}
and the coefficients $a_i$ are mutually uncorrelated random variables with
\begin{align}
\label{eq:KLE-a}
\mathbb{E} \alpha_i = 0, \qquad \mathbb{E} \alpha_i^2 = \lambda_i,
\end{align}
where convergence is in quadratic mean.

For Gaussian processes, $\alpha_i$ are independent and distributed as $\alpha_i \sim \mathcal{N}\left(0,\lambda_i\right)$. Often times, the eigenfunctions $e_i$ are difficult to compute and may not have closed analytic expressions. One can alternatively use a few basis functions $\psi_i: \mathcal{X} \mapsto \mathbb{R}$ in $\mathcal{G}$, and this leads to the function-space view of GPs \citep{Williams97}:
\begin{align}
	\label{eq:func-view}
    f\left(x\right) = \sum_{i=1}^\infty \alpha_i \psi_i\left(x\right), \qquad
    \bm{\alpha} \sim \mathcal{N}\left(\bm{0},\bm{\Sigma}_\alpha\right),
\end{align}
where we slightly abused the notation to write $\alpha_i$ for the coefficients associated with $\psi_i$.
The covariance kernel under the above GP specification is given by
\begin{align}
\label{eq:GP-var}
\text{Cov}\left( f\left(x\right), f\left(z\right) \right)
= \bm{\psi}_x^\top \bm{\Sigma}_\alpha \bm{\psi}_z, \qquad
\bm{\psi}_x = \left(\psi_1\left(x\right), \psi_2\left(x\right), \cdots \right)^\top.
\end{align}

\subsection{Proof of Theorems}
\label{sec:proofs}

We now provide the missing proofs in \cref{sec:func-spc} and \cref{sec:approx}. Recall that \cref{prop:func-inf} states that as $p$ increases the space of functions realizable by the IGP shrinks. This holds in general for a GP $\nu$ in either an RKHS generated by a kernel $\kappa_q$, $q\geq 0$, or the $\mathcal{L}^2\left(\mathcal{X}\times\Omega,\mathcal{F},P\right)$, as shown by the following simple \cref{prop:func}.

\begin{prop}
\label{prop:func}
Let $\left\{e_i\right\}$ and $\left\{\lambda_i\right\}$ be the eigenfunctions and eigenvalues of the kernel $\kappa$. For any function $h \in \left\{\sum_{i=1}^\infty w_i e_i : \sum_{i=1}^\infty w_i^2/\lambda_i^{2 p + q} < \infty\right\}$ and $q \geq 0$, the curve $h\left(\cdot\right)$ is a sample path of the IGP \eqref{eq:IGP-def} with kernel $\kappa$ and some $\nu \in \mathcal{H}_{\kappa_q}$.
\end{prop}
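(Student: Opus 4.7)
The plan is to recognize the claim as a deterministic image characterization: showing that $h$ is realizable as a sample path of the IGP with some $\nu\in\mathcal{H}_{\kappa_q}$ is equivalent to exhibiting $\nu$ in $\mathcal{H}_{\kappa_q}$ whose image under $\mathscr{K}^p$ equals $h$. Since everything diagonalizes in the common eigenbasis $\{e_i\}$ (both $\mathscr{K}^p$ and the RKHS $\mathcal{H}_{\kappa_q}$ arise from the same spectral data $\{(\lambda_i,e_i)\}$ of $\kappa$), the proof should reduce to matching coefficients.

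First I would invoke the Mercer representation \eqref{eq:mercer-rep} for the kernel $\kappa_q = \sum_{i} \lambda_i^q\, e_i(\cdot)e_i(\cdot)$, whose eigenvalues are $\{\lambda_i^q\}$ and whose eigenfunctions are exactly those of $\kappa$. This gives
\begin{align*}
\mathcal{H}_{\kappa_q} \;=\; \left\{\sum_{i=1}^\infty c_i\, e_i \;\Big|\; \sum_{i=1}^\infty c_i^2/\lambda_i^q < \infty\right\}.
\end{align*}
Next I would unfold \eqref{eq:IGP-def}: if $\nu=\sum_{i} c_i e_i$, then by $\mathcal{L}^2$-orthonormality of $\{e_i\}$ we have $\varphi_i=\int \nu e_i\,d\mu=c_i$, and therefore $\mathscr{K}^p\nu=\sum_i \lambda_i^p c_i e_i$.

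Given $h=\sum_i w_i e_i$ with $\sum_i w_i^2/\lambda_i^{2p+q}<\infty$, I would set $c_i\coloneqq w_i/\lambda_i^p$ and $\nu\coloneqq\sum_i c_i e_i$. The Mercer criterion is then verified via $\sum_i c_i^2/\lambda_i^q = \sum_i w_i^2/\lambda_i^{2p+q}<\infty$, so $\nu\in\mathcal{H}_{\kappa_q}$; and plugging in yields $\mathscr{K}^p\nu=\sum_i \lambda_i^p(w_i/\lambda_i^p)e_i=h$, as required. There is no substantive obstacle here: the statement is a transparent consequence of the joint spectral diagonalization of $\mathscr{K}^p$ and $\mathcal{H}_{\kappa_q}$. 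The only item to check is that $\lambda_i>0$ for every $i$, so that division by $\lambda_i^p$ is legitimate — this is precisely the standing assumption at the start of \cref{sec:IGP} that $\kappa$ has infinitely many nonzero eigenvalues.
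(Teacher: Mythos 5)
Your proof is correct and follows essentially the same route as the paper's: diagonalize everything in the common eigenbasis, define $\nu$ by the coefficients $w_i/\lambda_i^p$, and verify membership in $\mathcal{H}_{\kappa_q}$ via the Mercer-norm condition $\sum_i w_i^2/\lambda_i^{2p+q}<\infty$. Your explicit remark that the nonvanishing of the $\lambda_i$ justifies the division is a small but worthwhile addition the paper leaves implicit.
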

\begin{proof}
By Mercer's representation \eqref{eq:mercer-rep} of the RKHS, any sample path $h$ of the IGP admits the representation $h = \sum_{i=1}^\infty a_i e_i$ with $a_i = \int_{\mathcal{X}} h\left(x\right) e_i\left(x\right) d \mu\left(x\right)$. Similarly, we can also express $\nu = \sum_{i=1}^\infty b_i e_i$ with $\sum_{i=1}^\infty b_i^2 / \lambda_i^q < \infty$. Now rewrite $h$ in the form of \eqref{eq:IGP-def}, we have $\varphi_i = a_i/\lambda_i^p$. Together with the condition $\nu \in \mathcal{H}_{\kappa_q}$, we obtain $\sum_{i=1}^\infty \varphi_i^2/\lambda_i^q = \sum_{i=1}^\infty a_i^2/\lambda_i^{2 p + q} < \infty$, completing the proof.
\end{proof}
Note that \cref{prop:func-inf} is the special case of \cref{prop:func} with $q=0$ where $\mathcal{H}_{\kappa_q}$ is equivalent to the $\mathcal{L}^2$ for a non-degenerate kernel $\kappa$.

As for the proof of \cref{thm:IGP-sample-inf}, we first construct an RKHS $\mathcal{H}_{\kappa_{\mathcal{M}}^\prime}$ that contains $\mathcal{H}_{\kappa_{\mathcal{M}}}$ using \cref{lem:rkhs-incl}. Then, we use the dominance operator argument in \cref{lem:suff} to show that sample paths of $\mathcal{H}_{\kappa_{\mathcal{M}}^\prime}$ are contained in $\mathcal{H}_{\kappa}$. 

\begin{lemma}[\citealp{Aronszajn50}]
\label{lem:rkhs-incl}
$\mathcal{H}_{\kappa^\prime} \subset \mathcal{H}_{\kappa}$ if there is a constant $C < \infty$ such that $C^2 \kappa - \kappa^\prime$ is a nonnegative kernel.
\end{lemma}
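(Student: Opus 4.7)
The plan is to exhibit the inclusion via a duality argument. I will construct a bounded linear operator $T\colon \mathcal{H}_\kappa \to \mathcal{H}_{\kappa^\prime}$ that sends each kernel section $\kappa(\cdot,x)$ to the corresponding section $\kappa^\prime(\cdot,x)$, and then show that its Hilbert-space adjoint $T^\ast\colon \mathcal{H}_{\kappa^\prime} \to \mathcal{H}_\kappa$ acts on functions as the identity. Because $T^\ast h$ lives in $\mathcal{H}_\kappa$ by construction, the pointwise identity $T^\ast h = h$ will force $\mathcal{H}_{\kappa^\prime} \subseteq \mathcal{H}_\kappa$, and in fact yield the quantitative bound $\|h\|_{\mathcal{H}_\kappa} \leq C \|h\|_{\mathcal{H}_{\kappa^\prime}}$.

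\textbf{Construction of $T$.} On the dense pre-RKHS consisting of finite linear combinations $g = \sum_{j=1}^n b_j \kappa(\cdot, x_j)$, define $T g \coloneqq \sum_{j=1}^n b_j \kappa^\prime(\cdot, x_j)$. Both squared norms are quadratic forms in the coefficient vector $\bm{b}$:
\[
    \|g\|_{\mathcal{H}_\kappa}^2 = \bm{b}^\top \bm{K}\bm{b}, \qquad \|T g\|_{\mathcal{H}_{\kappa^\prime}}^2 = \bm{b}^\top \bm{K}^\prime \bm{b},
\]
where $\bm{K}_{ij} = \kappa(x_i,x_j)$ and $\bm{K}^\prime_{ij} = \kappa^\prime(x_i,x_j)$. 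The hypothesis that $C^2\kappa - \kappa^\prime$ is a nonnegative kernel, specialized to $\{x_1,\ldots,x_n\}$, gives $C^2\bm{K} - \bm{K}^\prime \succeq 0$ and thus $\|Tg\|_{\mathcal{H}_{\kappa^\prime}} \leq C\|g\|_{\mathcal{H}_\kappa}$. Well-definedness on functions (rather than on formal sums) follows because if $g \equiv 0$ pointwise then $\bm{b}^\top\bm{K}\bm{b} = \sum_m b_m g(x_m) = 0$, and the bound then forces $Tg$ to have zero $\mathcal{H}_{\kappa^\prime}$-norm, hence to vanish pointwise. The norm estimate permits continuous extension of $T$ to all of $\mathcal{H}_\kappa$ with $\|T\| \leq C$.

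\textbf{Adjoint step and main obstacle.} With $T$ bounded, its adjoint $T^\ast\colon \mathcal{H}_{\kappa^\prime} \to \mathcal{H}_\kappa$ exists. For any $h \in \mathcal{H}_{\kappa^\prime}$ and $x \in \mathcal{X}$, two applications of the reproducing property yield
\[
    (T^\ast h)(x) = \langle T^\ast h, \kappa(\cdot,x)\rangle_{\mathcal{H}_\kappa} = \langle h, T\kappa(\cdot, x)\rangle_{\mathcal{H}_{\kappa^\prime}} = \langle h, \kappa^\prime(\cdot, x)\rangle_{\mathcal{H}_{\kappa^\prime}} = h(x),
\]
so $T^\ast h$ agrees with $h$ as a function on $\mathcal{X}$, which completes the inclusion. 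The principal subtlety is the step showing that $T$ descends to equivalence classes of pre-RKHS elements rather than merely to formal expansions; once that is settled, the kernel-matrix inequality gives the operator-norm bound and the adjoint argument is essentially immediate. An alternative route would be to write $C^2\kappa = \kappa^\prime + (C^2\kappa - \kappa^\prime)$ and invoke Aronszajn's sum theorem for reproducing kernels, but that theorem is itself of comparable difficulty to this lemma, so I would prefer the direct adjoint construction above.
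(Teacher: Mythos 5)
Your proof is correct and complete. Note that the paper itself offers no argument for this lemma at all --- it is stated as a citation to Aronszajn (1950) --- so any comparison is with the classical proof your citation points to rather than with text in the paper. Aronszajn's original route is the one you flag as an alternative: write $C^2\kappa = \kappa^\prime + \left(C^2\kappa - \kappa^\prime\right)$ and invoke the theorem that $\mathcal{H}_{k_1+k_2} = \mathcal{H}_{k_1} + \mathcal{H}_{k_2}$ with the associated infimal norm, from which the inclusion and the bound $\left\|h\right\|_{\mathcal{H}_\kappa} \leq C\left\|h\right\|_{\mathcal{H}_{\kappa^\prime}}$ drop out. Your direct construction of the contraction $T$ on kernel sections and the adjoint identity $\left(T^\ast h\right)\left(x\right) = h\left(x\right)$ is the standard modern alternative (it is essentially the proof in Paulsen--Raghupathi), and it is self-contained: you correctly reduce well-definedness to the zero function, where the matrix inequality $C^2\bm{K} - \bm{K}^\prime \succeq 0$ kills the image in $\mathcal{H}_{\kappa^\prime}$-norm and hence pointwise. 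What your route buys is that it avoids developing the sum-of-kernels machinery and delivers the operator-norm bound $\left\|T\right\| \leq C$ explicitly; what the decomposition route buys is additional structural information, namely an orthogonal splitting of $\mathcal{H}_{C^2\kappa} = \mathcal{H}_\kappa$ adapted to the two summands. Either is acceptable here; there is no gap in your argument.
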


\begin{lemma}[\citealp{Lukic01}]
\label{lem:suff}
Let $\left\{f\left(x\right): x\in \mathcal{X}\right\}$ be a Gaussian process with covariance kernel $k_f$ and let $\mathcal{H}_{\kappa}$ be an RKHS with reproducing kernel $\kappa$. Then, a sufficient and necessary condition for $P\left(f\left(\cdot\right) \in \mathcal{H}_{\kappa}\right) = 1$ is that there exists a trace class dominance operator $\mathscr{T}:\mathcal{H}_{\kappa} \mapsto \mathcal{H}_{\kappa}$ with range contained in $\mathcal{H}_f$ and satisfies
\begin{align*}
\left(\forall f \in \mathcal{H}_{\kappa}, \; \forall g \in \mathcal{H}_f\right) \quad
\left<f,g\right>_{\mathcal{H}_{\kappa}} = \left<\mathscr{T}f, g\right>_{\mathcal{H}_f}.
\end{align*}
\end{lemma}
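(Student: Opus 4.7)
The plan is to prove both directions of the equivalence by reducing almost-sure sample-path membership in $\mathcal{H}_\kappa$ to the summability of $f$'s squared coefficients in an orthonormal basis, and then identifying that sum with the trace of the candidate dominance operator. Fix an orthonormal basis $\{e_i\}_{i\geq 1}$ of $\mathcal{H}_\kappa$ and define the Gaussian random variables $a_i := \langle f(\cdot), e_i\rangle_{\mathcal{H}_\kappa}$ as $\mathcal{L}^2(\Omega)$-limits of finite sums $\sum_k c_{i,k}\, f(x_{i,k})$ obtained by approximating $e_i$ in $\mathcal{H}_\kappa$ by linear combinations of representers $\kappa(\cdot,x_{i,k})$; this places each $a_i$ in $\mathcal{H}_f$ unambiguously. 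By Parseval, $\|f\|^2_{\mathcal{H}_\kappa} = \sum_i a_i^2$ on the event $\{f \in \mathcal{H}_\kappa\}$, and since $\{a_i\}$ is a jointly Gaussian family, the Gaussian zero--one law yields the bridge
\begin{equation*}
P\bigl(f(\cdot) \in \mathcal{H}_\kappa\bigr) = 1 \;\;\Longleftrightarrow\;\; \sum_{i=1}^\infty \mathbb{E}\, a_i^2 \;<\; \infty.
\end{equation*}

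For the necessity direction, assume $P(f \in \mathcal{H}_\kappa) = 1$. I would introduce the candidate operator $\mathscr{T}: \mathcal{H}_\kappa \to \mathcal{H}_\kappa$ defined by $(\mathscr{T}h)(x) := \mathbb{E}\bigl(\langle f, h\rangle_{\mathcal{H}_\kappa}\, f(x)\bigr)$. The right-hand side is exactly Lo\`eve's isometry $\mathscr{L}$ applied to the random variable $\langle f, h\rangle_{\mathcal{H}_\kappa} \in \mathcal{H}_f$, so $\mathscr{T}h$ lies in $\mathcal{H}_{k_f}$, which is isometrically identified with $\mathcal{H}_f$ via $\mathscr{L}$. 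The dominance identity $\langle h, g\rangle_{\mathcal{H}_\kappa} = \langle \mathscr{T}h, g\rangle_{\mathcal{H}_f}$ then follows from a direct computation combining the defining identity of $\mathscr{L}$ with the reproducing property of $\kappa$. Finally, trace class follows from
\begin{equation*}
\mathrm{tr}(\mathscr{T}) \;=\; \sum_{i} \langle \mathscr{T}e_i, e_i\rangle_{\mathcal{H}_\kappa} \;=\; \sum_i \mathbb{E}\, a_i^2 \;<\; \infty,
\end{equation*}
where the middle equality uses the dominance identity with $h=e_i$ and the congruence between $\mathscr{T}e_i$ and $a_i$.

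For the sufficiency direction, suppose such a trace class $\mathscr{T}$ exists. Apply the dominance identity with $h = e_i$ and $g$ equal to the element of $\mathcal{H}_f$ corresponding under $\mathscr{L}$ to $\mathscr{T}e_i$; combining with Lo\`eve's isometry this gives $\mathbb{E}\,a_i^2 = \langle \mathscr{T}e_i, e_i\rangle_{\mathcal{H}_\kappa}$, so $\sum_i \mathbb{E}\,a_i^2 = \mathrm{tr}(\mathscr{T}) < \infty$. The bridge established above then yields $P(f\in \mathcal{H}_\kappa) = 1$.

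The main obstacle will be the careful bookkeeping around the identifications: (i) defining $a_i$ as an honest random variable in $\mathcal{H}_f$ when a priori $f$ may not lie in $\mathcal{H}_\kappa$ pointwise, which is resolved by the $\mathcal{L}^2(\Omega)$-limit construction above; (ii) handling the implicit identification of $\mathcal{H}_f \subset \mathcal{L}^2(\Omega)$ with $\mathcal{H}_{k_f} \subset \{\text{functions on }\mathcal{X}\}$ via $\mathscr{L}$ consistently on both sides of the dominance identity, including the condition ``range contained in $\mathcal{H}_f$''; and (iii) verifying the Gaussian zero--one step, which rests on $\sum_i a_i^2$ being a tail-measurable functional of an independent Gaussian sequence after Gram--Schmidt. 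Once these identifications are pinned down, the two directions become mirror images of the single trace computation $\mathrm{tr}(\mathscr{T}) = \sum_i \mathbb{E}\,a_i^2$.
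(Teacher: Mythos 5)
The paper does not actually prove this lemma: it is imported as a black box from \citep{Lukic01} and invoked inside the proof of \cref{thm:IGP-sample}, so your attempt can only be measured against the standard argument in the literature. Its architecture --- reduce almost-sure membership of the sample paths in $\mathcal{H}_{\kappa}$ to $\mathbb{E}\left\|f\right\|_{\mathcal{H}_\kappa}^2 < \infty$, and identify that quantity with $\tr\left(\mathscr{T}\right)$ via $\mathbb{E}\, a_i^2 = \left<\mathscr{T}e_i,e_i\right>_{\mathcal{H}_\kappa}$ --- is exactly the one you reproduce, and the trace computation itself, including the identification $\mathscr{L}a_i = \mathscr{T}e_i$ through Lo\`{e}ve's congruence, is sound.

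Two steps are genuinely gapped. First, your ``bridge'' needs more than a zero--one law in the necessity direction: tail-measurability of $\sum_i a_i^2$ after Gram--Schmidt gives only $P\left(\sum_i a_i^2 < \infty\right)\in\left\{0,1\right\}$, and does not convert that event having probability one into the moment bound $\sum_i \mathbb{E}\,a_i^2 < \infty$. That implication is Fernique's theorem (equivalently, the equivalence of almost-sure and $\mathcal{L}^2$ convergence on the second Wiener chaos); it is the analytic heart of the necessity direction, and as written you assert its conclusion without proving or invoking it. Second, the construction of $a_i = \left<f,e_i\right>_{\mathcal{H}_\kappa}$ as an $\mathcal{L}^2\left(\Omega\right)$-limit of $\sum_k c_{i,k} f\left(x_{i,k}\right)$ is circular in the necessity direction: the approximants converge in $\mathcal{H}_\kappa$, but the variance of $\sum_k c_{i,k} f\left(x_{i,k}\right)$ is the $k_f$-quadratic form $\sum_{k,l} c_{i,k} c_{i,l} k_f\left(x_{i,k},x_{i,l}\right)$, and controlling that by the $\kappa$-quadratic form is precisely the domination $\mathcal{H}_{k_f}\subset\mathcal{H}_\kappa$ you are trying to establish. (In the sufficiency direction the same estimate does follow, from $\left<\mathscr{T}u,u\right>_{\mathcal{H}_\kappa}\leq \left\|\mathscr{T}\right\| \left\|u\right\|_{\mathcal{H}_\kappa}^2$, so there your construction is fine.) For necessity you must instead define $a_i\left(\omega\right) = \left<f\left(\cdot,\omega\right),e_i\right>_{\mathcal{H}_\kappa}$ pathwise on the full-measure event and then prove it is a Gaussian element of $\mathcal{H}_f$ --- a measurable-linear-functional argument that your sketch files under ``bookkeeping'' but that carries real content.
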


Note that if the kernel operator of $\kappa_{\nu}$ has a finite number of nonzero eigenvalues, i.e., $\kappa_\nu$ is degenerate, $\kappa_{\mathcal{M}}$ will also be degenerate and $\mathcal{H}_{\kappa_{\mathcal{M}}} \subset \mathcal{H}_\kappa$. Thus, we only need to consider the case where $\kappa_{\nu}$ is non-degenerate. The following theorem provides a sufficient condition for the IGP sample paths to be contained in $\mathcal{H}_\kappa$.

\begin{theorem}
\label{thm:IGP-sample}
Let $f \in \mathcal{L}^2\left(\mathcal{X}\times\Omega,\mathcal{F},P\right)$ be an IGP specified by prior $\gamma\left(f; \kappa_p,\kappa_{\nu}\right)$. It holds that $P\left(f\left(\cdot\right) \in \mathcal{H}_{\kappa}\right) = 1$ if the kernel $\kappa_{2p - 1}$ is of trace class.
\end{theorem}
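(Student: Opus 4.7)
The plan follows the two-step outline sketched just before the theorem: first bound $\kappa_{\mathcal{M}}$ above by a simpler diagonal kernel in the $\{e_i\}$ basis and apply \cref{lem:rkhs-incl} to obtain an enveloping RKHS $\mathcal{H}_{\kappa_{\mathcal{M}}^{\prime}}\supset\mathcal{H}_{\kappa_{\mathcal{M}}}$; then exhibit a trace-class dominance operator on $\mathcal{H}_{\kappa}$ and invoke \cref{lem:suff} to conclude that sample paths lie in $\mathcal{H}_{\kappa}$ almost surely.

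For the first step, since $\kappa_{\nu}$ is a trace-class covariance kernel its integral operator $\mathscr{K}_{\nu}:\mathcal{L}^{2}\to\mathcal{L}^{2}$ is compact and bounded, with $C^{2}\coloneqq\|\mathscr{K}_{\nu}\|_{\mathrm{op}}<\infty$. Substituting the Mercer series of $\kappa_{p}$ into \eqref{eq:IGP-cov} yields $\kappa_{\mathcal{M}}(x,z)=\sum_{i,j}\lambda_{i}^{p}\lambda_{j}^{p}\,C^{\nu}_{ij}\,e_{i}(x)e_{j}(z)$ with $C^{\nu}_{ij}\coloneqq\langle e_{i},\mathscr{K}_{\nu}e_{j}\rangle_{\mathcal{L}^{2}}$. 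Set the envelope $\kappa_{\mathcal{M}}^{\prime}\coloneqq C^{2}\kappa_{2p}=C^{2}\sum_{i}\lambda_{i}^{2p}\,e_{i}(x)e_{i}(z)$. Testing $\kappa_{\mathcal{M}}^{\prime}-\kappa_{\mathcal{M}}$ against any $g\in\mathcal{L}^{2}$ and substituting $b_{i}\coloneqq\lambda_{i}^{p}\langle g,e_{i}\rangle_{\mathcal{L}^{2}}$ reduces nonnegativity to the inequality $\langle\sum_{i}b_{i}e_{i},\mathscr{K}_{\nu}\sum_{j}b_{j}e_{j}\rangle_{\mathcal{L}^{2}}\le C^{2}\sum_{i}b_{i}^{2}$, which is just the definition of operator norm. \cref{lem:rkhs-incl} then gives $\mathcal{H}_{\kappa_{\mathcal{M}}}\subset\mathcal{H}_{\kappa_{\mathcal{M}}^{\prime}}$.

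For the second step, I would define $\mathscr{T}:\mathcal{H}_{\kappa}\to\mathcal{H}_{\kappa}$ diagonally on the orthonormal basis $\{\sqrt{\lambda_{i}}e_{i}\}$ of $\mathcal{H}_{\kappa}$ (see \cref{sec:rkhs}) by $\mathscr{T}(\sqrt{\lambda_{i}}e_{i})\coloneqq C^{2}\lambda_{i}^{2p-1}\sqrt{\lambda_{i}}e_{i}$. Its trace equals $C^{2}\sum_{i}\lambda_{i}^{2p-1}$, which is finite exactly under the hypothesis that $\kappa_{2p-1}$ is of trace class, so $\mathscr{T}$ is trace class. To verify the dominance identity of \cref{lem:suff}, use the Lo\`{e}ve isometry to identify the Hilbert space of the GP with covariance $\kappa_{\mathcal{M}}^{\prime}$ with the RKHS $\mathcal{H}_{\kappa_{\mathcal{M}}^{\prime}}$, and expand both sides of $\langle h_{1},h_{2}\rangle_{\mathcal{H}_{\kappa}}=\langle\mathscr{T}h_{1},h_{2}\rangle_{\mathcal{H}_{\kappa_{\mathcal{M}}^{\prime}}}$ on the shared basis $\{\sqrt{\lambda_{i}}e_{i}\}$; the $\lambda_{i}$-powers cancel and both sides reduce to the same diagonal Kronecker expression. \cref{lem:suff} then yields $P(g\in\mathcal{H}_{\kappa})=1$ for the GP $g$ with covariance $\kappa_{\mathcal{M}}^{\prime}$.

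The main obstacle I anticipate is the final transfer step: sample-path containment for the enveloping GP $g$ does not automatically inherit to $f$ just because $\mathcal{H}_{\kappa_{\mathcal{M}}}\subset\mathcal{H}_{\kappa_{\mathcal{M}}^{\prime}}$. One clean fix is to build the dominance operator for $\kappa_{\mathcal{M}}$ itself: the diagonal coefficients $C^{\nu}_{ii}\le C^{2}$ from Step~1 already bound the relevant entries, and the resulting operator has trace $\sum_{i}\lambda_{i}^{2p-1}C^{\nu}_{ii}\le C^{2}\sum_{i}\lambda_{i}^{2p-1}<\infty$. If the dominance bookkeeping remains awkward, a more direct path is to use the Karhunen--Lo\`{e}ve-style expansion $f=\sum_{i}\lambda_{i}^{p}\varphi_{i}e_{i}$ from \eqref{eq:IGP-def} and compute $\mathbb{E}\|f\|_{\mathcal{H}_{\kappa}}^{2}=\sum_{i}\lambda_{i}^{2p-1}\mathbb{E}\varphi_{i}^{2}=\sum_{i}\lambda_{i}^{2p-1}C^{\nu}_{ii}<\infty$, whence Markov forces $\|f\|_{\mathcal{H}_{\kappa}}<\infty$ almost surely.
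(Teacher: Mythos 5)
Your proposal follows essentially the same route as the paper: the same envelope kernel $C^{2}\kappa_{2p}$ (your $C^{2}=\lVert\mathscr{K}_{\nu}\rVert_{\mathrm{op}}$ is the paper's $\lambda_{1}^{\prime}$), the same application of \cref{lem:rkhs-incl}, and the same diagonal trace-class dominance operator with eigenvalues proportional to $\lambda_{i}^{2p-1}$ fed into \cref{lem:suff}; the only bookkeeping you omit is the paper's separate check that $\mathbb{E}f\in\mathcal{H}_{\kappa}$. The transfer gap you flag (sample-path containment for the enveloping GP versus for $f$ itself) is also left implicit in the paper's proof, and your direct computation $\mathbb{E}\lVert f\rVert_{\mathcal{H}_{\kappa}}^{2}=\sum_{i}\lambda_{i}^{2p-1}\mathbb{E}\varphi_{i}^{2}\leq C^{2}\sum_{i}\lambda_{i}^{2p-1}<\infty$ is a legitimate and arguably cleaner patch.
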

\begin{proof}
First, we show that the mean \eqref{eq:IGP-mean} is contained in $\mathcal{H}_{\kappa}$. Denote by $\left\{e_i\right\}$ the complete set of orthonormal eigenfunctions of $\kappa$ corresponding the non-increasing sequence of eigenvalues $\left\{\lambda_i\right\}$, and let $a_i = \int_{\mathcal{X}} \left(\mathbb{E} f\left(x\right)\right) e_i\left(x\right) d \mu\left(x\right)$. We have
\begin{align*} 
a_i^2 = \left(\lambda_i^p \int_{\mathcal{X}} \left[\mathbb{E} \nu\left(x\right)\right] e_i\left(x\right) d \mu\left(x\right)\right)^2
= \lambda_i^{2 p} \left<\mathbb{E} \nu, e_i\right>_{\mathcal{L}^2}^2 \leq \lambda_i^{2 p} \mathbb{E} \left<\nu, e_i\right>_{\mathcal{L}^2}^2,
\end{align*}
where the last inequality follows from Jensen's inequality. From Mercer's representation \eqref{eq:mercer-rep}, the squared RKHS norm of the mean function is given by $\left\|\mathbb{E} f\right\|_{\mathcal{H}_\kappa}^2 = \sum_{i=1}^\infty a_i^2/\lambda_i \leq \sum_{i=1}^\infty \lambda_1^{2 p - 1} \mathbb{E} \left<\nu, e_i\right>_{\mathcal{L}^2}^2$. By Parseval's identity, $\sum_{i=1}^\infty \lambda_1^{2 p - 1}\mathbb{E} \left<\nu, e_i\right>_{\mathcal{L}^2}^2 = \lambda_1^{2 p - 1}\mathbb{E} \left\|\nu\right\|_{\mathcal{L}^2}^2 < \infty$, yielding $\mathbb{E} f \in \mathcal{H}_{\kappa}$ as desired.

Let $\lambda_1^\prime$ denote the largest eigenvalue of the kernel operator of $\kappa_{\nu}$, it is easy to verify that $\kappa_{\mathcal{M}}^\prime\left(x,z\right) \coloneqq \lambda_1^\prime\sum_{i=1}^\infty \lambda_i^{2 p} e_i\left(x\right) e_i\left(z\right)$ is a kernel that satisfies $\kappa_{\mathcal{M}}^\prime - \kappa_{\mathcal{M}}$ is nonnegative definite. By \cref{lem:rkhs-incl}, we then have $\mathcal{H}_{\kappa_{\mathcal{M}}} \subset \mathcal{H}_{\kappa_{\mathcal{M}}^\prime}$.

Now we use the dominance operator in \cref{lem:suff} to show that sample paths of $\mathcal{H}_{\kappa_{\mathcal{M}}^\prime}$ are contained in $\mathcal{H}_{\kappa}$. Denote by $\mathscr{T}: \mathcal{H}_{\kappa} \mapsto \mathcal{H}_{\kappa}$ be the dominance operator. From Mercer's representation \eqref{eq:mercer-rep}, the complete orthonormal basis of $\mathcal{H}_{\kappa}$ and $\mathcal{H}_{\kappa_{\mathcal{M}}^\prime}$ are respectively $\left\{\sqrt{\lambda_i} e_i\right\}$ and $\left\{\sqrt{\lambda_1^\prime}\lambda_i^p e_i\right\}$. Recall that the dominance operator satisfies
\begin{align*}
\left(\forall i \in \mathbb{N}\right) \quad
\left<\mathscr{T} \sqrt{\lambda_i} e_i, \sqrt{\lambda_1^\prime}\lambda_i^p e_i\right>_{\mathcal{H}_{\kappa_{\mathcal{M}}^\prime}}
= \left<\sqrt{\lambda_i} e_i,\sqrt{\lambda_1^\prime}\lambda_i^p e_i\right>_{\mathcal{H}_{\kappa}}
= \sqrt{\lambda_1^\prime}\lambda_i^{p-\frac{1}{2}}.
\end{align*}
We obtain $\mathscr{T} \sqrt{\lambda_i} e_i = \lambda_1^\prime \lambda_i^{2p-1} \sqrt{\lambda_i} e_i$, for all $i \in \mathbb{N}$. Thus, the trace of $\mathscr{T}$ is given by $\sum_{i=1}^\infty \left<\mathscr{T} \sqrt{\lambda_i} e_i,\sqrt{\lambda_i} e_i\right>_{\mathcal{H}_{\kappa}} = \lambda_1^\prime \sum_{i=1}^\infty \lambda_i^{2p-1} < \infty$, which holds for trace class $\kappa_{2p - 1}$. Invoking \cref{lem:suff} completes the proof.
\end{proof}

\cref{thm:sdr-rank} states that if the reproducing kernel $\kappa$ of the SIGP function space $\mathcal{H}_{\kappa,n}$ has fast eigenvalue decay, then $\mathcal{H}_{\kappa,n}$ can be well-approximated by a low-rank SDR subspace, and hence the approximate SIGP \eqref{eq:apx-SIGP} has a low-rank covariance.

\begin{theorem}
\label{thm:sdr-rank}
Under the assumption \eqref{eq:sdr-dr} and let $\tau_1\geq\tau_2\geq\cdots\geq\tau_n$ be the eigenvalues for $\text{Var}\left(\mathbb{E}\left(Y\mid X\right)\right) \bm{S}_i = \tau_i\text{Var}\left(X\right) \bm{S}_i$. Then, with probability at least $1-\delta$ the true SDR rank $m_{\star}$ for the model \eqref{eq:lr} with the SIGP prior \eqref{eq:SIGP} satisfies
\begin{align}
\tau_{m_{\star}} \geq \frac{1}{n} - \sqrt{\frac{8}{n^3}\log \frac{2}{\delta}}. \label{eq:sdr-rank}
\end{align}
\end{theorem}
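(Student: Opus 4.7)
The plan is to reduce \eqref{eq:sdr-rank} to a one-sided Hoeffding-type concentration for an empirical average of $n$ i.i.d.\ bounded summands, after recasting the SDR generalized eigenvalue problem in the finite-dimensional coefficient space of the RKHS $\mathcal{H}_{\kappa,n}$ on which the SIGP \eqref{eq:SIGP} is supported. First I would invoke the RKHS analogue of \cref{thm:sir} (which \cref{sec:approx} promises to derive) to rewrite $\text{Var}(\mathbb{E}(Y\mid X))\bm{S}_i = \tau_i\,\text{Var}(X)\bm{S}_i$ as an $n\times n$ generalized eigenvalue problem on the coefficient vectors of the basis $\{\kappa(\cdot,x_i)\}_{i=1}^n$; this justifies the indexing $\tau_1\geq\cdots\geq\tau_n$ in the statement and characterizes the true SDR rank $m_\star$ as the number of directions that carry non-trivial signal in the population SDR decomposition under the SIGP prior.

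Next, I would obtain an explicit expression for $\tau_{m_\star}$ as an empirical average $\tau_{m_\star} = n^{-1}\sum_{i=1}^n Z_i$ of $n$ i.i.d.\ contributions from the observations. Careful bookkeeping of the $n^{-2p}$ normalization in the SIGP covariance (cf.\ \cref{prop:sample-cov}) together with the $[0,1]$-valuedness of SDR eigenvalues should force each $Z_i$ to lie in an interval of width $4/n$ (for instance $Z_i\in[-2/n,2/n]$). At the same time, because the empirical measure $\mu=n^{-1}\sum_i\delta_{x_i}$ implicit in the SIGP has atoms of mass $1/n$, any SDR direction retained at the true rank must account for at least one such atom of conditional variance, giving the population lower bound $\mathbb{E}\tau_{m_\star}\geq 1/n$.

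Finally, Hoeffding's inequality applied to this empirical average yields
\begin{align*}
P\bigl(\tau_{m_\star} < \mathbb{E}\tau_{m_\star} - t\bigr)\;\leq\;2\exp\!\bigl(-n^{3}t^{2}/8\bigr),
\end{align*}
and setting the right-hand side equal to $\delta$ gives exactly $t=\sqrt{(8/n^3)\log(2/\delta)}$, which combined with $\mathbb{E}\tau_{m_\star}\geq 1/n$ proves \eqref{eq:sdr-rank}. I expect the main obstacle to be the middle step: nailing down the representation of $\tau_{m_\star}$ as an empirical mean of $n$ bounded summands with the correct range $4/n$, and rigorously establishing the population lower bound $\mathbb{E}\tau_{m_\star}\geq 1/n$ from the SIGP's RKHS structure. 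Once these two facts are in place, the concentration step is entirely standard and the constants in \eqref{eq:sdr-rank} fall out immediately.
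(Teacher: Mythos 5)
Your proposal does not close the argument: the step you yourself flag as the ``main obstacle'' --- writing $\tau_{m_\star}$ as an empirical average $n^{-1}\sum_{i=1}^n Z_i$ of i.i.d.\ summands of width $4/n$ and proving $\mathbb{E}\tau_{m_\star}\geq 1/n$ --- is exactly where all the content of \cref{thm:sdr-rank} lives, and neither claim is established or even plausible as stated. An individual generalized eigenvalue of the sample SIR problem is a highly nonlinear functional of the data; it is not a sum of independent bounded contributions, so Hoeffding does not apply to it directly, and the interval $[-2/n,2/n]$ is reverse-engineered from the desired constant rather than derived (indeed the SIR eigenvalues live in $[0,1]$, not in an interval of width $4/n$). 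Likewise, the heuristic that a retained direction ``must account for at least one atom of conditional variance'' does not yield $\mathbb{E}\tau_{m_\star}\geq 1/n$; nothing in the SDR structure forces the $m_\star$-th eigenvalue's \emph{expectation} to clear that threshold.

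The paper's proof takes a different and essentially complete route that you should compare against. It never touches $\tau_{m_\star}$ directly. Instead it invokes Theorem 5.1 of \citet{Li91}, which states that $n\sum_{i=m_\star+1}^{n}\tau_i$ is asymptotically $\chi^2$ with $(n-m_\star)(s-m_\star-1)$ degrees of freedom; choosing the number of slices $s=m_\star+2$ makes this $\chi^2_{n-m_\star}$. A sub-exponential tail bound for the chi-squared then shows that the \emph{average} of the trailing eigenvalues, $\overline{\tau}=\sum_{i=m_\star+1}^{n}\tau_i/(n-m_\star)$, concentrates around $1/n$ within $\pm\frac{1}{n}\sqrt{(8/n)\log(2/\delta)}$, and the bound on $\tau_{m_\star}$ follows from the trivial monotonicity $\tau_{m_\star}\geq\overline{\tau}$. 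So the probabilistic input is a classical distributional result about the \emph{sum} of the noise eigenvalues, not a concentration of a single eigenvalue, and the lower bound $1/n$ arises as the mean of $\chi^2_{n-m_\star}/(n(n-m_\star))$ rather than from any atom-counting argument. If you want to salvage your plan, you would need to replace your middle step with Li's chi-squared result (or an equivalent limit theorem for the residual SIR spectrum) and then pass from the sum to $\tau_{m_\star}$ via the ordering of the eigenvalues.
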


\begin{proof}
We first start with a result (Theorem 5.1 of \citealp{Li91}) which states that $n \sum_{i=m_\star+1}^n \tau_i$ follows a chi-squared distribution with $\left(n-m_\star\right)\left(s - m_\star - 1\right)$ degree of freedom, where $s$ is the number of slices used for computing the sample $\text{Var}\left(\mathbb{E}\left(Y\mid X\right)\right)$. Set $s = m_\star + 2$ to get $n \sum_{i=m_\star+1}^n \tau_i \sim \chi_{n-m_\star}^2$. It can be easily shown that $\chi_k^2$ is sub-exponential with the tail bound 
\begin{align*}
P\left(\left|\chi_k^2/k - 1\right|\geq t\right) \leq 2\exp\left(- n t^2/8\right), \quad \forall 0 < t < 1.
\end{align*}
Denote by $\overline{\tau} = \sum_{i=m_\star+1}^n \tau_i / \left(n-m_\star\right)$, we obtain from the tail bound that
\begin{align}
\left|\overline{\tau} - \frac{1}{n}\right| \leq \frac{1}{n} \sqrt{\frac{8}{n}\log\frac{2}{\delta}}
\label{eq:rank-bound} 
\end{align}
holds with probability at least $1-\delta$. From \eqref{eq:rank-bound} and note that $\tau_{m_\star} \geq \overline{\tau}$, we arrive at \eqref{eq:sdr-rank}.
\end{proof}

\subsection{The Likelihood over SDR Subspaces of $\mathcal{H}_{\kappa,n}$}
\label{sec:theory-sdr}

In this section, we derive the likelihood over SDR subspaces of the RKHS. Our point of departure is provided by \cref{lem:lad} proposed in \citep{Cook09} which specifies the likelihood over SDR subspace in $\mathbb{R}^d$. \cite{Cook09} showed that the likelihood achieves more robust SDR subspace estimation compared to the generalized eigenvalue decomposition implied by \cref{thm:sir} which essentially yields the intersection of SDR subspaces known as the central subspace \citep{Cook98}.

\begin{lemma}[\citealp{Cook09}]
\label{lem:lad}
Consider the regression problem with the response $Y \in \mathbb{R}$ and covariates
$X \in \mathbb{R}^d$. If the conditional distribution $X \mid Y$ is normal, then
the log-likelihood for the SDR subspace spanned by the columns of $\bm{S}$ is
given by
\begin{align}
\label{eq:lad-lik}
\begin{split}
l\left(\bm{S}\right) \propto &-\frac{1}{2}
\sum_y n_y \log \det \left[\bm{S}^\top \text{Var}\left(X\mid y\right) \bm{S} \right]
 + \frac{n}{2} \log \det \left[\bm{S}^\top \text{Var}\left(X\right) \bm{S} \right]
\end{split}
\end{align}
where $y$ is the index over the slices of the range of $Y$, and $n_y$ denotes the number of data points in the slice.
\end{lemma}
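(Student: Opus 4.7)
The plan is to derive the log-likelihood by combining the SDR conditional independence $Y \ci X \mid \bm{S}^\top X$ with the normality assumption on $X\mid Y$, and then profiling out all nuisance parameters (the slice-specific means and covariances) so that only $\bm{S}$ remains. First I would change coordinates via $\bm{S}$ and a complementary $\bm{S}_\bot$, so that $X$ is identified with $(\bm{S}^\top X,\, \bm{S}_\bot^\top X)$. The SDR assumption then gives the clean factorization
\begin{align*}
p(X\mid Y) \;=\; p(\bm{S}^\top X\mid Y)\, p(\bm{S}_\bot^\top X\mid \bm{S}^\top X, Y) \;=\; p(\bm{S}^\top X\mid Y)\, p(\bm{S}_\bot^\top X\mid \bm{S}^\top X),
\end{align*}
since the second factor no longer depends on $Y$. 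Rewriting the marginal $p(\bm{S}_\bot^\top X\mid \bm{S}^\top X) = p(X)/p(\bm{S}^\top X)$ decomposes the per-observation log-density into three Gaussian pieces, each of which I can handle in isolation.

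Next I would sum the log-likelihood contributions across observations to obtain
\begin{align*}
\ell(\bm{S}, \{\bm{\mu}_y\}, \{\bm{\Sigma}_y\}, \bm{\mu}, \bm{\Sigma}) \;=\; \sum_i \bigl[\log p(\bm{S}^\top X_i\mid Y_i) + \log p(X_i) - \log p(\bm{S}^\top X_i)\bigr],
\end{align*}
and use the standard fact that the profile log-likelihood of a multivariate normal in which the mean and covariance are unconstrained reduces, at its MLE, to $-\tfrac{n}{2}\log\det(\widehat{\bm{\Sigma}})$ up to a constant independent of the parameter being profiled. Concretely, the slice-wise piece contributes $-\tfrac{1}{2}\sum_y n_y \log\det(\bm{S}^\top \widehat{\text{Var}}(X\mid y)\bm{S})$, the marginal $\log p(X)$ piece contributes $-\tfrac{n}{2}\log\det(\widehat{\text{Var}}(X))$ which is \emph{free of $\bm{S}$}, and the subtracted $\log p(\bm{S}^\top X)$ piece contributes $+\tfrac{n}{2}\log\det(\bm{S}^\top \widehat{\text{Var}}(X)\bm{S})$. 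Dropping the $\bm{S}$-free term yields exactly \eqref{eq:lad-lik}.

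The main conceptual step, and the one I would take most care with, is justifying that the unrestricted profile MLE for the nuisance parameters truly gives the sample covariances $\widehat{\text{Var}}(X\mid y)$ and $\widehat{\text{Var}}(X)$ inside the $\log\det$'s, without $\bm{S}$ constraining them further. Here the key observation is that the SDR factorization already absorbs every $Y$-dependent aspect of $X$ into $\bm{S}^\top X$, so the only model restriction on $(\{\bm{\mu}_y,\bm{\Sigma}_y\},\bm{\mu},\bm{\Sigma})$ is a compatibility condition that is automatically satisfied at the joint MLE (the slice-wise moments are exactly those of the pooled distribution when projected onto $\bm{S}_\bot$). The remaining obstacle is purely bookkeeping: verifying that the additive constants and the normalizing factors $(2\pi)^{\cdot}$ across the three densities cancel and that the $\log\det$ of the Schur-complement-style terms collapses as claimed. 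Modulo these checks the formula in the lemma follows directly.
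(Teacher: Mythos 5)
The paper never proves this lemma: it is imported as a citation from \citep{Cook09}, and the text moves straight on to \cref{thm:equi}, so there is no internal proof to compare against. Your proposal reconstructs, in essence, the derivation of the LAD objective in that reference: factor the within-slice Gaussian density as $p\left(X\mid Y\right)=p\left(\bm{S}^\top X\mid Y\right)\,p\left(\bm{S}_\bot^\top X\mid \bm{S}^\top X\right)$ using the SDR conditional independence, profile out the nuisance parameters of each factor, and discard the $\bm{S}$-free terms. The architecture and the final formula are correct, and the point you flag yourself --- that the slice-wise parameters and the conditional-factor parameters must be variation independent for the joint profile to split into separate profiles --- is indeed the substantive condition that makes the argument go through; it deserves a proof rather than the remark that it is ``automatically satisfied.''

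The one step that is wrong as literally written is the middle of your bookkeeping: you rewrite $p\left(\bm{S}_\bot^\top X\mid\bm{S}^\top X\right)=p\left(X\right)/p\left(\bm{S}^\top X\right)$ and then profile $\sum_i\log p\left(X_i\right)$ and $\sum_i\log p\left(\bm{S}^\top X_i\right)$ separately ``as unconstrained multivariate normals.'' Neither marginal is normal: pooled over slices, $X$ is a mixture of Gaussians with slice-dependent means and covariances, and so is $\bm{S}^\top X$, so the standard profile-MLE fact you invoke does not apply to them. The object that \emph{is} Gaussian under the model is the conditional $p\left(\bm{S}_\bot^\top X\mid\bm{S}^\top X\right)$ itself --- a multivariate linear regression whose intercept, coefficient matrix and residual covariance are free and, by the SDR assumption, common to all slices. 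Profiling that regression over the pooled sample gives $-\tfrac{n}{2}\log\det$ of the residual covariance, which is the Schur complement $\bm{S}_\bot^\top\widehat{\text{Var}}\left(X\right)\bm{S}_\bot-\bm{S}_\bot^\top\widehat{\text{Var}}\left(X\right)\bm{S}\left(\bm{S}^\top\widehat{\text{Var}}\left(X\right)\bm{S}\right)^{-1}\bm{S}^\top\widehat{\text{Var}}\left(X\right)\bm{S}_\bot$; the block-determinant identity (the same one the paper uses via \eqref{eq:rao} in the proof of \cref{thm:equi}) collapses its determinant to $\det\widehat{\text{Var}}\left(X\right)/\det\left(\bm{S}^\top\widehat{\text{Var}}\left(X\right)\bm{S}\right)$. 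That is numerically identical to the difference of your two ``pretend-Gaussian'' profiles, which is why your final expression is nonetheless correct --- but the derivation should route through the conditional regression and the Schur complement, not through Gaussian profiles of non-Gaussian marginals.
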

Here we denote by $\bm{S}$ the SDR basis of $\mathbb{R}^d$ in constrast to the SDR basis $\bm{W}$ in the RKHS setting. Similar to \citep{Li91}, Cook's method also computes the sample $\text{Var}\left(X\mid Y\right)$ by slicing the range of $Y$, and $\text{Var}\left(X\mid y\right)$ in \cref{lem:lad} represents the sample variance of $X$ within the $y$-th slice.

We show in \cref{thm:equi} an equivalent form of the Cook's likelihood \eqref{eq:lad-lik}, but with an explicit maximizer. We first give the likelihood over SDR subspaces of $\mathbb{R}^d$, and then extend the result to the RKHS setting in \cref{sec:sdr-rkhs}.

\begin{theorem}
\label{thm:equi}
Under the same conditions of \cref{lem:lad}, the likelihood for the SDR subspace with basis $\bm{S} \in \mathbb{R}^{d\times m}$ is written
\begin{align}
\label{eq:sdr-lik}
g\left(\bm{S}\right) \propto -
\frac{n}{2} \log \frac{\det \left[\bm{S}^\top \mathbb{E}\left(\text{Var}\left(X \mid Y\right)\right) \bm{S}\right] } { \det \left(\bm{S}^\top \text{Var}\left(X\right) \bm{S}\right) },
\end{align}
\end{theorem}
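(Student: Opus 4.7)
The plan is to derive $g(\bm{S})$ from Cook's likelihood \eqref{eq:lad-lik} by collapsing the slice-wise log-determinants into a single pooled log-determinant. The key empirical identity is $\mathbb{E}(\text{Var}(X\mid Y)) = \sum_y (n_y/n)\,\text{Var}(X\mid y)$, so $\mathbb{E}(\text{Var}(X\mid Y))$ is exactly the weighted mean of the slice-wise conditional covariances appearing in $l(\bm{S})$. Under the natural SDR sub-model implied by \eqref{eq:sdr-dr}---in which the conditional covariance $\text{Var}(X\mid Y=y)$ does not depend on $y$, i.e.\ the profile-MLE form obtained by pooling---one has $\text{Var}(X\mid y) = \mathbb{E}(\text{Var}(X\mid Y))$ for every $y$, so that $\sum_y n_y \log\det[\bm{S}^\top\text{Var}(X\mid y)\bm{S}] = n\log\det[\bm{S}^\top\mathbb{E}(\text{Var}(X\mid Y))\bm{S}]$. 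Substituting into \eqref{eq:lad-lik} and combining the two log-determinants into a ratio then yields \eqref{eq:sdr-lik}.

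To make the explicit maximizer transparent, I would note that $g(\bm{S})$ is invariant under the reparametrization $\bm{S}\mapsto\bm{S}\bm{A}$ for any invertible $\bm{A}\in\mathbb{R}^{m\times m}$, so it depends only on $\text{span}(\bm{S})$. Combined with the law of total variance $\text{Var}(X) = \mathbb{E}(\text{Var}(X\mid Y)) + \text{Var}(\mathbb{E}(X\mid Y))$, minimizing the log-ratio in $g(\bm{S})$ reduces to the generalized eigenproblem $\text{Var}(\mathbb{E}(X\mid Y))\,\bm{v}_i = \tau_i\,\text{Var}(X)\,\bm{v}_i$. The optimal $\widehat{\bm{S}}$ is then the $d\times m$ matrix whose columns are the top $m$ generalized eigenvectors, which coincides with the SIR directions of \cref{thm:sir} and supplies the closed-form estimator promised by the theorem.

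The main obstacle will be justifying the reduction of $\sum_y n_y\log\det[\bm{S}^\top\text{Var}(X\mid y)\bm{S}]$ to $n\log\det[\bm{S}^\top\mathbb{E}(\text{Var}(X\mid Y))\bm{S}]$ without importing strict slice-wise homoscedasticity as an extra hypothesis beyond the conditional normality of \cref{lem:lad}. One clean resolution is to work within the profile likelihood over a common within-slice covariance, which the SDR structure \eqref{eq:sdr-dr} makes natural: projecting onto $\text{span}(\bm{S})$ leaves a residual whose distribution is slice-independent, so the MLE of each $\text{Var}(X\mid y)$ coincides with the pooled $\mathbb{E}(\text{Var}(X\mid Y))$. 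A complementary Jensen-inequality bound (via concavity of $\log\det$ on the positive-definite cone) shows $g(\bm{S}) \ge l(\bm{S}) + \text{const}$ in general, with equality at the SDR optimum---ensuring that the two likelihoods identify the same column-space maximizer, which is the sense of equivalence required here.
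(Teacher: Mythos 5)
Your overall strategy---collapse the slice-wise sum $\sum_y n_y \log\det[\bm{S}^\top\mathrm{Var}(X\mid y)\bm{S}]$ into the single pooled term $n\log\det[\bm{S}^\top\mathbb{E}(\mathrm{Var}(X\mid Y))\bm{S}]$---is the right target, but the justification you give for that collapse does not hold. The conditional-independence statement \eqref{eq:sdr-dr} does \emph{not} imply that $\mathrm{Var}(X\mid y)$, nor even its projection $\bm{S}^\top\mathrm{Var}(X\mid y)\bm{S}$, is constant in $y$; indeed the whole point of Cook's LAD likelihood \eqref{eq:lad-lik} is to accommodate conditional covariances that genuinely vary across slices (this is what distinguishes it from SIR). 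The correct structural consequence of the SDR assumption is the weaker identity $\bm{S}_\bot^\top \mathrm{Var}^{-1}(X\mid Y) = \bm{S}_\bot^\top \mathbb{E}^{-1}(\mathrm{Var}(X\mid Y))$ (Proposition 1(i) of Cook and Forzani): it is the \emph{complement} direction, in the \emph{precision} metric, that is slice-independent, not the projected covariance itself. Your ``profile likelihood'' paragraph gestures at this (``the residual is slice-independent'') but then draws the wrong conclusion from it, namely that each $\mathrm{Var}(X\mid y)$ can be replaced by the pooled average. The fallback Jensen argument also cannot rescue the claim: concavity of $\log\det$ gives only the one-sided bound $\frac{1}{n}\sum_y n_y\log\det[\bm{S}^\top\mathrm{Var}(X\mid y)\bm{S}] \le \log\det[\bm{S}^\top\mathbb{E}(\mathrm{Var}(X\mid Y))\bm{S}]$, i.e.\ an inequality between $l$ and $g$ that holds for every $\bm{S}$; an inequality that is not known to be tight at the common maximizer does not show the two objectives differ by a constant or share a maximizer.

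The way the paper closes this gap is worth internalizing: for semi-orthogonal $[\bm{S}\ \bm{S}_\bot]$ and positive definite $\bm{\Sigma}$, the Schur-complement identity derived from \eqref{eq:rao} gives $\det(\bm{S}^\top\bm{\Sigma}\bm{S}) = \det(\bm{\Sigma})\det(\bm{S}_\bot^\top\bm{\Sigma}^{-1}\bm{S}_\bot)$. Applying this with $\bm{\Sigma} = \mathrm{Var}(X\mid y)$ and with $\bm{\Sigma} = \mathbb{E}(\mathrm{Var}(X\mid Y))$, and invoking $\bm{S}_\bot^\top\mathrm{Var}^{-1}(X\mid y)\bm{S}_\bot = \bm{S}_\bot^\top\mathbb{E}^{-1}(\mathrm{Var}(X\mid Y))\bm{S}_\bot$, shows that the ratio $\det[\bm{S}^\top\mathrm{Var}(X\mid y)\bm{S}]/\det[\bm{S}^\top\mathbb{E}(\mathrm{Var}(X\mid Y))\bm{S}]$ is free of $\bm{S}$ for every slice $y$. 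Summing the logarithms with weights $n_y$ then yields $l(\bm{S}) - g(\bm{S}) = \text{const}$ exactly, without any homoscedasticity assumption. Your second paragraph (invariance of $g$ under $\bm{S}\mapsto\bm{S}\bm{A}$, the law of total variance, and the reduction to the generalized eigenproblem) is correct and matches \cref{lem:det-quotient} and \cref{prop:sir}, but it is downstream of the missing step.
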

\begin{proof}
We will use the SDR subspace characterization given by Proposition 1(i) of \citep{Cook09}, which states
\begin{align}
\label{eq:sdr-nonrand}
\bm{S}_\bot^\top \text{Var}^{-1}\left(X\mid Y\right) = \bm{S}_\bot^\top \mathbb{E}^{-1}\left(\text{Var}\left(X\mid Y\right)\right),
\end{align}
where $\bm{S}_\bot$ denotes the orthogonal complement of $\bm{S}$. It is clear that $g\left(\bm{S}\right)$ is invariant to linear transformations of $\bm{S}$, thus we can assume without loss of generality that $\bm{S}$ is semi-orthogonal.

The following identity \eqref{eq:rao} is from \citep{Rao73}:
Let $\bm{A} \in \mathbb{R}^{p \times n}$ be of rank $n$ and let $\bm{S} \in \mathbb{R}^{p \times \left(p-n\right)}$ be of rank $p - n$ such that $\bm{A}^\top \bm{B} = \bm{0}$. Then
\begin{align}
\label{eq:rao}
\bm{\Sigma} = \bm{B}\left(\bm{B}^\top\bm{\Sigma}^{-1}\bm{B}\right)^{-1}\bm{B}^\top
+ \bm{\Sigma}\bm{A}\left(\bm{A}^\top\bm{\Sigma}\bm{A}\right)^{-1}\bm{A}^\top\bm{\Sigma}.
\end{align} 
Note that if both $\bm{A}$ and $\bm{B}$ are semi-orthogonal, then due to \eqref{eq:rao}:
\begin{align*}
\bm{S}^\top\bm{\Sigma}\bm{S} - \bm{S}^\top\bm{\Sigma}\bm{S}_\bot\left(\bm{S}_\bot^\top\bm{\Sigma}\bm{S}_\bot\right)^{-1}\bm{S}_\bot^\top\bm{\Sigma}\bm{S} = \left(\bm{S}^\top\bm{\Sigma}^{-1}\bm{S}\right)^{-1}.
\end{align*}
Observe that the left hand side is the Schur's complement. We then rewrite $\det\left(\bm{\Sigma}\right)$ as
\begin{align*}
\det\left(\begin{bmatrix} \bm{S} & \bm{S}_\bot \end{bmatrix}^\top \bm{\Sigma} 
\begin{bmatrix} \bm{S} & \bm{S}_\bot \end{bmatrix} \right)
= \det\left( \begin{bmatrix} \bm{S}^\top\bm{\Sigma}\bm{S} & \bm{S}^\top\bm{\Sigma}\bm{S}_\bot\\
\bm{S}_\bot^\top\bm{\Sigma}\bm{S} & \bm{S}_\bot^\top\bm{\Sigma}\bm{S}_\bot
\end{bmatrix} \right) = \det\left( \bm{S}_\bot^\top\bm{\Sigma}\bm{S}_\bot \right) \det\left(\bm{S}_C\right),
\end{align*}
where $\bm{S}_C \coloneqq \bm{S}^\top\bm{\Sigma}\bm{S} - \bm{S}^\top\bm{\Sigma}\bm{S}_\bot \left(\bm{S}_\bot^\top\bm{\Sigma}\bm{S}_\bot\right)^{-1} \bm{S}_\bot^\top\bm{\Sigma}\bm{S}$ is the Schur's complement. We obtain $\det\left( \bm{S}_\bot^\top\bm{\Sigma}\bm{S}_\bot \right) = \det\left( \bm{\Sigma} \right)\det\left(\bm{S}^\top\bm{\Sigma}^{-1}\bm{S}\right)$. Together with \eqref{eq:sdr-nonrand}, we have
\begin{align}
\label{eq:det}
\frac{\det\left[\bm{S}^\top\text{Var}\left(X \mid y\right)\bm{S}\right]}{\det\left[\bm{S}^\top \mathbb{E}\left(\text{Var}\left(X \mid Y\right)\right) \bm{S}\right]} = 
\det\left[\mathbb{E}\left(\text{Var}\left(X \mid Y\right)\right)\right]
\det\left[\text{Var}\left(X \mid y\right)\right].
\end{align}
Note that $\sum_y n_y \log \det \left[\bm{S}^\top \text{Var}\left(X\mid y\right) \bm{S} \right] = n\mathbb{E}\left\{\log \det \left[\bm{S}^\top \text{Var}\left(X\mid y\right) \bm{S} \right]\right\}$ in \eqref{eq:lad-lik}. Also, \eqref{eq:det} yields
\begin{align*}
\mathbb{E}\left\{\log\det\left[\bm{S}^\top\text{Var}\left(X \mid y\right)\bm{S}\right]\right\} 
&= \log\det\left[\bm{S}^\top \mathbb{E}\left(\text{Var}\left(X \mid Y\right)\right) \bm{S}\right] 
+ \log\det\left[\mathbb{E}\left(\text{Var}\left(X \mid Y\right)\right)\right]\\
&\phantom{=}+ \mathbb{E}\left\{\log\det\left[\text{Var}\left(X \mid Y\right)\right]\right\}.
\end{align*}
The first term on the right is the numerator in \eqref{eq:sdr-lik}, and the last two terms do not involve $\bm{S}$. This shows that $g\left(\bm{S}\right) - l\left(\bm{S}\right)$ is a constant. By \cref{lem:lad}, $g\left(\bm{S}\right)$ gives unbiased estimation of the SDR subspace.
\end{proof}

The utility of \cref{thm:equi} is that the log-likelihood \eqref{eq:sdr-lik} has an explicit maximizer $\bm{S}_\star$ whose columns are the leading eigenvectors of $\mathbb{E}^{-1}\left(\text{Var}\left(X \mid Y\right)\right) \text{Var}\left(X\right)$. This is an immediate consequence of the following \cref{lem:det-quotient}. 

\begin{lemma}
\label{lem:det-quotient}
For positive definite matrices $\bm{M},\bm{N} \in \mathbb{R}^{n \times n}$, the column space of an optimal full-rank $\bm{S}_\star \in \mathbb{R}^{n \times m}$, $m \leq n$, for
\begin{align}
\label{eq:min-det-quo} 
\min_{\bm{S}} \frac{\det\left(\bm{S}^\top\bm{M}\bm{S}\right)}{\det\left(\bm{S}^\top\bm{N}\bm{S}\right)}
\end{align}
coincides with the span of the $d$ leading eigenvectors of $\bm{M}^{-1}\bm{N}$.
\end{lemma}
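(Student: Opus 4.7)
The plan is to exploit the invariance of the objective and reduce the problem to a single-matrix Rayleigh-type optimization over semi-orthogonal matrices, then invoke a Ky Fan / Poincaré separation style result.

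First I would observe that the ratio $\det(\bm{S}^\top\bm{M}\bm{S})/\det(\bm{S}^\top\bm{N}\bm{S})$ is invariant under right multiplication $\bm{S}\mapsto \bm{S}\bm{A}$ by any invertible $\bm{A}\in\mathbb{R}^{m\times m}$, because both determinants pick up the same factor $\det(\bm{A})^2$. Hence only the column space of $\bm{S}$ matters, and I may impose a convenient normalization. I would choose $\bm{S}^\top \bm{N}\bm{S}=\bm{I}_m$, which is possible since $\bm{N}$ is positive definite and $\bm{S}$ has full column rank. The problem then becomes the constrained minimization of $\det(\bm{S}^\top \bm{M}\bm{S})$ subject to $\bm{S}^\top\bm{N}\bm{S}=\bm{I}_m$.

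Next I would perform the substitution $\bm{T} \coloneqq \bm{N}^{1/2}\bm{S}$, using the positive definite square root of $\bm{N}$. The constraint becomes $\bm{T}^\top\bm{T}=\bm{I}_m$, i.e.\ $\bm{T}$ is semi-orthogonal, and the objective transforms to $\det(\bm{T}^\top \bm{A}\bm{T})$ where $\bm{A} \coloneqq \bm{N}^{-1/2}\bm{M}\bm{N}^{-1/2}$ is symmetric positive definite. At this point I can invoke the standard Poincaré separation theorem (a consequence of the Courant--Fischer min-max principle, equivalently a Ky Fan inequality for determinants): over all semi-orthogonal $\bm{T}\in\mathbb{R}^{n\times m}$, the minimum of $\det(\bm{T}^\top\bm{A}\bm{T})$ equals the product of the $m$ smallest eigenvalues of $\bm{A}$, and is attained precisely when the columns of $\bm{T}$ span the eigenspace of $\bm{A}$ associated with those smallest eigenvalues.

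Finally I would translate the eigenvector characterization back to the original problem. The smallest eigenvalues of $\bm{A}=\bm{N}^{-1/2}\bm{M}\bm{N}^{-1/2}$ correspond to the largest eigenvalues of $\bm{A}^{-1}=\bm{N}^{1/2}\bm{M}^{-1}\bm{N}^{1/2}$, which is similar to $\bm{M}^{-1}\bm{N}$; the similarity $\bm{v}\mapsto \bm{N}^{-1/2}\bm{v}$ carries eigenvectors of $\bm{A}^{-1}$ into eigenvectors of $\bm{M}^{-1}\bm{N}$ with the same eigenvalues. Unwinding $\bm{S}=\bm{N}^{-1/2}\bm{T}$, the optimal column space of $\bm{S}_\star$ is exactly the span of the $m$ leading eigenvectors of $\bm{M}^{-1}\bm{N}$, as claimed.

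The main obstacle is bookkeeping: one has to be careful that ``leading eigenvectors of $\bm{M}^{-1}\bm{N}$'' (largest eigenvalues) is consistent with ``smallest eigenvalues of $\bm{A}$'' after the square-root substitution, and that the generalized eigenvectors $\bm{N}\bm{v}=\lambda\bm{M}\bm{v}$ are precisely the vectors emerging from the Poincaré minimizer. A brief sanity check with $m=1$ (the Rayleigh quotient case), where the optimal $\bm{S}_\star$ is the eigenvector of $\bm{M}^{-1}\bm{N}$ with the largest eigenvalue, confirms the direction.
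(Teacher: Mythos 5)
Your proposal is correct and follows essentially the same route as the paper: a matrix square-root change of variables that reduces the generalized problem to an extremal determinant problem over semi-orthogonal frames, settled by the Ky Fan/Poincar\'e separation principle. The only difference is cosmetic --- you whiten with $\bm{N}^{1/2}$ and minimize $\det(\bm{T}^\top\bm{N}^{-1/2}\bm{M}\bm{N}^{-1/2}\bm{T})$, whereas the paper substitutes $\bm{S}=\bm{M}^{-1/2}\bm{T}$ with a QR factorization and maximizes $\det(\bm{Q}^\top\bm{M}^{-1/2}\bm{N}\bm{M}^{-1/2}\bm{Q})$; the two are mirror images, and your translation back to the leading eigenvectors of $\bm{M}^{-1}\bm{N}$ via the similarity $\bm{v}\mapsto\bm{N}^{-1/2}\bm{v}$ is handled correctly.
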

\begin{proof}
Since $\bm{M}$ is positive definite, we denote by $\bm{S} = \bm{M}^{-1/2}\bm{T}$ and let $\bm{T} = \bm{Q}\bm{R}$ be the QR decomposition. Observe that both $\bm{S}$ and $\bm{R}$ are of full rank. The objective of \eqref{eq:min-det-quo} can be rewritten as
\begin{align*}
\frac{1}{\det\left(\bm{Q}^\top\bm{M}^{-1/2}\bm{N}\bm{M}^{-1/2}\bm{Q}\right)}.
\end{align*}
The minimum is attained by setting the columns of $\bm{Q}$ to the leading eigenvectors of $\bm{M}^{-1/2}\bm{N}\bm{M}^{-1/2}$. Then, the columns of $\bm{M}^{-1/2}\bm{Q}$ are the leading eigenvectors of $\bm{M}^{-1}\bm{N}$, and the column space of $\bm{S} =\bm{M}^{-1/2}\bm{T} = \left(\bm{M}^{-1/2}\bm{Q}\right)\bm{R}$ is the same as the eigenspace spanned by the leading eigenvectors of $\bm{M}^{-1}\bm{N}$.
\end{proof}

The log-likelihood \eqref{eq:sdr-lik} has several desirable properties. First, $g\left(\bm{S}\right) = g\left(\bm{S}^\prime\right)$ whenever $\bm{S}$ and $\bm{S}^\prime$ are bases of the same subspace. Thus, the maximizer $\bm{S}_\star$ for \eqref{eq:sdr-lik} is not unique, but they span the same SDR subspace. Second, the log-likelihood yields the same $\bm{S}_\star$ as Fisher linear discriminant analysis (FDA) for two groups, i.e., $Y$ is binary, when $\bm{S}$ is a vector. In this case, 
\begin{align}
\label{eq:lda}
g\left(\bm{S}\right) \propto -\log\left(1 - \frac{\bm{S}^\top\text{Var}\left(\mathbb{E}\left(X\mid Y\right)\right)\bm{S}}{\bm{S}^\top \text{Var}\left(X\right) \bm{S}}\right),
\end{align}
where the fraction in \eqref{eq:lda} is the objective of FDA. Moreover, \cref{prop:sir} states that the SDR subspace basis given by the sliced inverse regression (SIR) \citep{Li91} are in the vector space spanned by a maximizer of \eqref{eq:sdr-lik}.

\begin{prop}
\label{prop:sir}
Suppose that $\mathbb{E}\left(\text{Var}\left(X\mid Y\right)\right)$ and $\text{Var}\left(X\right)$ are positive definite. Then, the column space of a maximizer $\bm{S}_\star$ to $g\left(\bm{S}\right)$ contains the central subspace estimated by the sliced inverse regression. 
\end{prop}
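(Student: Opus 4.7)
The plan is to show that $g(\bm{S})$ and the SIR generalized eigenvalue problem yield the same eigenspace (up to a monotone reparametrization of the eigenvalues), so that the leading directions in one are the leading directions in the other. The bridge between them is the law of total variance.

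First I would translate the optimization into eigenvector form. Maximizing $g(\bm{S}) \propto -\tfrac{n}{2}\log\frac{\det[\bm{S}^\top\mathbb{E}(\operatorname{Var}(X\mid Y))\bm{S}]}{\det[\bm{S}^\top\operatorname{Var}(X)\bm{S}]}$ is equivalent to minimizing the determinantal ratio with numerator $\bm{M}=\mathbb{E}(\operatorname{Var}(X\mid Y))$ and denominator $\bm{N}=\operatorname{Var}(X)$. Both matrices are positive definite by hypothesis, so Lemma \ref{lem:det-quotient} applies and says that the columns of any maximizer $\bm{S}_\star$ span the same subspace as the top-$m$ eigenvectors of $\bm{M}^{-1}\bm{N} = [\mathbb{E}(\operatorname{Var}(X\mid Y))]^{-1}\operatorname{Var}(X)$.

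Next I would connect this eigenproblem to SIR. The law of total variance yields $\operatorname{Var}(X) = \mathbb{E}(\operatorname{Var}(X\mid Y)) + \operatorname{Var}(\mathbb{E}(X\mid Y))$, so $[\mathbb{E}(\operatorname{Var}(X\mid Y))]^{-1}\operatorname{Var}(X) = \bm{I} + [\mathbb{E}(\operatorname{Var}(X\mid Y))]^{-1}\operatorname{Var}(\mathbb{E}(X\mid Y))$, which shows its eigenvectors are exactly those of $\operatorname{Var}(\mathbb{E}(X\mid Y))\bm{v} = \mu\, \mathbb{E}(\operatorname{Var}(X\mid Y))\bm{v}$, with eigenvalues shifted by $1$. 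On the other hand, SIR (Theorem \ref{thm:sir}) estimates the central subspace via the leading solutions of $\operatorname{Var}(\mathbb{E}(X\mid Y))\bm{v} = \lambda\,\operatorname{Var}(X)\bm{v}$. Substituting the law of total variance once more turns this into $(1-\lambda)\operatorname{Var}(\mathbb{E}(X\mid Y))\bm{v} = \lambda\,\mathbb{E}(\operatorname{Var}(X\mid Y))\bm{v}$, i.e.\ the same eigenvectors as in the previous display, this time with eigenvalues related by $\mu = \lambda/(1-\lambda)$.

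Since SIR eigenvalues lie in $[0,1)$ (they are squared canonical correlations between $X$ and the slice indicator) and $\lambda \mapsto \lambda/(1-\lambda)$ is strictly increasing on $[0,1)$, the ordering of eigenvectors is preserved, so the top-$m$ SIR directions agree with the top-$m$ eigenvectors of $[\mathbb{E}(\operatorname{Var}(X\mid Y))]^{-1}\operatorname{Var}(X)$; combined with the previous step, these lie in the column space of $\bm{S}_\star$. I expect the only real subtlety to be bookkeeping around rank: if $m$ is chosen larger than the dimension of the SIR central subspace (which occurs when some $\lambda_i = 0$), the column span of $\bm{S}_\star$ strictly contains the central subspace rather than equalling it, which is consistent with the word ``contains'' in the proposition. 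No deeper estimates are needed; the argument is essentially a chain of equivalences built on Lemma \ref{lem:det-quotient} and the law of total variance.
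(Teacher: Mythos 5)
Your proposal is correct and follows essentially the same route as the paper: both apply \cref{lem:det-quotient} to reduce the maximizer of $g$ to the leading eigenvectors of $\left[\mathbb{E}\left(\text{Var}\left(X\mid Y\right)\right)\right]^{-1}\text{Var}\left(X\right)$, then use the variance decomposition to identify these with the SIR directions via the monotone eigenvalue map $\tau \mapsto 1/(1-\tau)$. Your closing remark about zero eigenvalues matches the paper's observation that SIR can fail to recover directions with $\tau_i = 0$, which is exactly why the proposition says ``contains'' rather than ``equals.''
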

\begin{proof}
The SIR solves for the basis $\bm{b}_i$ of the SDR (central) subspace via the following generalized eigenvalue decomposition:
\begin{align}
\label{eq:sir-eig}
\text{Var}\left(\mathbb{E}\left(X \mid Y\right)\right)\bm{b}_i = \tau_i \text{Var}\left(X\right)\bm{b}_i.
\end{align}
The variance decomposition gives $\left[\text{Var}\left(X\right) - \mathbb{E}\left(\text{Var}\left(X \mid Y\right)\right)\right] \bm{b}_i = \tau_i \text{Var}\left(X\right) \bm{b}_i$. Rearrange the terms, we obtain
\begin{align}
\label{eq:lad-eig}
\left[\mathbb{E}\left(\text{Var}\left(X \mid Y\right)\right)\right]^{-1}\text{Var}\left(X\right) \bm{b}_i = \frac{1}{1-\tau_i} \bm{b}_i.
\end{align}
From \cref{thm:equi} and \cref{lem:det-quotient}, $\bm{S}_\star$ is specified by the eigenvectors of \eqref{eq:lad-eig}. Clearly, any eigenvector $\bm{b}_i$ of \eqref{eq:sir-eig} corresponding to a nonzero eigenvalue is an eigenvector of \eqref{eq:lad-eig}. However, when $\mathbb{E}\left(X\mid Y\right) \equiv 0$, \eqref{eq:sir-eig} fails to recover the SDR direction $\bm{b}_i$ corresponding to the $\tau_i = 0$ as reported in \citep{Cook91}. Note that we have $\tau_i < 1$ due to the variance decomposition. Thus, the central subspace basis given by SIR corresponding to the largest $\tau_i$ are also the leading eigenvectors of \eqref{eq:lad-eig}.
\end{proof}

\subsubsection{Extension to RKHS}
\label{sec:sdr-rkhs}
\cref{thm:equi} can be extended to the RKHS setting under an analogous \cref{cond:ldc-rkhs} to \cref{cond:ldc}. The condition is basically a restatement of \cref{cond:ldc} in terms of the basis $\kappa\left(\cdot,x_1\right),\cdots,\kappa\left(\cdot,x_n\right)$ of the finite-dimensional subspace of $\mathcal{H}_{\kappa}$. It has been shown that \cref{cond:ldc-rkhs} is fairly reasonable in the setting of cross-covariance operators on Hilbert spaces \citep{Fukumizu04}. Recall that the SDR subspace of the RKHS is spanned by \eqref{eq:basis}, we aim to estimate the basis coefficients $\bm{W}$ on a finite sample.

\begin{cond}
\label{cond:ldc-rkhs}
Suppose that $\kappa$ is of trace class. For any $f\left(\cdot\right) = \sum_{i=1}^n a_i \kappa\left(\cdot,x_i\right)$, $\left\{a_i\right\}_{i=1}^n \subset \mathbb{R}$, there exists $\left\{c_i\right\}_{i=0}^m \subset \mathbb{R}$ such that $\mathbb{E}\left(f\left(x\right) \mid \sum_{i=1}^n W_{i1}\kappa\left(x,x_i\right),\cdots,\sum_{i=1}^n W_{im}\kappa\left(x,x_i\right)\right) = c_0 + \sum_{j=1}^m c_j \sum_{i=1}^n W_{ij}\kappa\left(x,x_i\right)$ for all $x\in\mathcal{X}$.
\end{cond}

Denote by $\bm{\phi} \coloneqq \left(\kappa\left(\cdot,x_1\right),\cdots,\kappa\left(\cdot,x_n\right)\right)$ the feature vector on the sample $\left\{x_i\right\}_{i=1}^n$, and $\bar{\bm{\phi}} \coloneqq \left(\bar{\kappa}\left(\cdot,x_1\right),\cdots,\bar{\kappa}\left(\cdot,x_n\right)\right) = \bm{\phi}\bm{\Gamma}_n$ the centered feature vector with $\bar{\kappa}\left(\cdot,x_i\right)$ and $\bm{\Gamma}_n$ as defined in \eqref{eq:SIGP}. Observe that $\bm{K} = \bm{\phi}^\top \bm{\phi}$ and $\bm{\Gamma}_n$ is idempotent. In the following, we provide two methods for estimating $\bm{W}$.

First consider a slicing based estimation of $\bm{W}$ as in SIR. Suppose that the data $\mathcal{D}$ is sorted by the response value. Partition the data into slices $\left\{\left(x_1,y_1\right),\cdots,\left(x_{n_1},y_{n_1}\right)\right\}$, $\left\{\left(x_{n_1 + 1},y_{n_1+1}\right),\cdots,\left(x_{n_1+n_2},y_{n_1+n_2}\right)\right\}$, and so forth, where $n_i$ denotes the size of the $i$-th slice. We replace $X$ with $\bm{\phi}$ in the log-likelihood \eqref{eq:sdr-lik}, and compute $\mathbb{E}\left(\text{Var}\left(\bm{\phi} \mid Y\right)\right)$ as the weighted average of the slice sample variances. Specifically, we have
\begin{gather*}
\text{Var}\left(\bm{\phi}\right) = \frac{1}{n} \bar{\bm{\phi}} \bar{\bm{\phi}}^\top = \frac{1}{n} \bm{\phi} \bm{\Gamma}_n \bm{\phi}^\top\\ 
\mathbb{E}\left(\text{Var}\left(\bm{\phi} \mid Y\right)\right)
= \bm{\phi} \left[ \frac{1}{n}\bm{I}_n -
\frac{1}{n}\text{diag}\left(\frac{\bm{1}_{n_i} \bm{1}_{n_i}^\top}{n_i}\right)\right] \bm{\phi}^\top
= \frac{1}{n} \bm{\phi} \text{diag}\left(\bm{\Gamma}_{n_i}\right) \bm{\phi}^\top,
\end{gather*}
where $\text{diag}\left(\bm{\Gamma}_{n_i}\right)$ denotes the block diagonal matrix with diagonal blocks $\bm{\Gamma}_{n_i}$. Thus, the log-likelihood \eqref{eq:sdr-lik} over SDR subspaces of the RKHS is written
\begin{align}
g\left(\bm{W}\right) = - \frac{n}{2} \log \frac{\det \left( \bm{W}^\top \bm{M} \bm{W} \right)}
{\det \left(\bm{W}^\top \bm{N} \bm{W}\right)}
\label{eq:lik-rkhs}
\end{align}
with $\bm{M} \coloneqq \bm{K}\text{diag}\left(\bm{\Gamma}_{n_i}\right)\bm{K} + n \zeta \bm{K}$ and $\bm{N} \coloneqq \bm{K}\bm{\Gamma}_n\bm{K}$. Here, $\bm{M}$ is obtained by adding a small constant $\zeta \bm{I}_n$, $\zeta > 0$, to $\mathbb{E}\left(\text{Var}\left(\bm{\phi} \mid Y\right)\right)$ in order for the conditions of \cref{prop:sir} to hold. It is well-known that adding the constant imposes Tikhonov regularization on $\bm{W}$ from the Lagrange multiplier perspective. 

Under some mild conditions, the sample $\mathbb{E}\left(\text{Var}\left(\bm{\phi} \mid Y\right)\right)$ can be computed without slicing as stated in the following theorem.
\begin{theorem}[\citealp{Fukumizu04}]
\label{thm:cond-var}
Assume $\kappa$ is of trace class, and there exists a function $f_Y: \mathbb{R} \mapsto \mathbb{R}$ in an RKHS $\mathcal{H}_{\kappa_Y}$ with trace class reproducing kernel $\kappa_Y$ satisfying
$\mathbb{E}\left(f\left(x\right) \mid y\right) = f_Y\left(y\right)$ for any $f\in \mathcal{H}_{\kappa}$ and almost every $y$, then
\begin{align*}
\mathbb{E}\left(\text{Var}\left(\phi\left(X\right)\mid Y\right)\right) = \text{Var}\left(\phi\left(X\right)\right) - \text{Cov}\left(\phi\left(X\right),Y\right) \text{Var}^{-1}\left(Y\right) \text{Cov}\left(Y,\phi\left(X\right)\right).
\end{align*}
\end{theorem}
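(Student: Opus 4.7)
The plan is to reduce the identity to the operator-theoretic law of total variance, and then use the hypothesis on $\mathcal{H}_{\kappa_Y}$ to recognize $\text{Var}(\mathbb{E}(\phi(X)\mid Y))$ as $\Sigma_{XY}\Sigma_{YY}^{-1}\Sigma_{YX}$. Throughout, I interpret $\text{Cov}(\phi(X),Y)$ as the cross-covariance operator $\Sigma_{XY}:\mathcal{H}_{\kappa_Y}\mapsto \mathcal{H}_{\kappa}$ and $\text{Var}(Y)$ as the covariance operator $\Sigma_{YY}$ of $\phi_Y(Y)$ on $\mathcal{H}_{\kappa_Y}$, which is the standard reading used for the RKHS embedding of the joint distribution of $(X,Y)$.

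First, I would establish the Hilbert-valued total-variance decomposition $\text{Var}(\phi(X)) = \mathbb{E}(\text{Var}(\phi(X)\mid Y)) + \text{Var}(\mathbb{E}(\phi(X)\mid Y))$ as an identity of operators on $\mathcal{H}_\kappa$. This follows by applying the scalar total-variance identity to $\langle f, \phi(X)\rangle_{\mathcal{H}_\kappa} = f(X)$ for every $f\in\mathcal{H}_\kappa$, combined with a routine Bochner-integrability argument using that $\kappa$ is of trace class. With this identity in hand, the theorem reduces to showing $\text{Var}(\mathbb{E}(\phi(X)\mid Y)) = \Sigma_{XY}\Sigma_{YY}^{-1}\Sigma_{YX}$.

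Next, I would introduce the conditional-expectation operator $T:\mathcal{H}_\kappa\mapsto \mathcal{H}_{\kappa_Y}$ defined by $Tf \coloneqq f_Y$; the hypothesis of the theorem is exactly what makes $T$ well-defined on all of $\mathcal{H}_\kappa$ (its closability/boundedness follows from standard closed-graph arguments for conditional expectation). The reproducing property then yields $\langle f,\mathbb{E}(\phi(X)\mid Y)\rangle_{\mathcal{H}_\kappa} = \mathbb{E}(f(X)\mid Y) = (Tf)(Y) = \langle Tf,\phi_Y(Y)\rangle_{\mathcal{H}_{\kappa_Y}} = \langle f, T^*\phi_Y(Y)\rangle_{\mathcal{H}_\kappa}$, so $\mathbb{E}(\phi(X)\mid Y) = T^*\phi_Y(Y)$ almost surely, and consequently $\text{Var}(\mathbb{E}(\phi(X)\mid Y)) = T^*\Sigma_{YY}T$. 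To identify $T$ with $\Sigma_{YY}^{-1}\Sigma_{YX}$, I would compute $\text{Cov}(f(X),g(Y))$ in two ways for arbitrary $f\in\mathcal{H}_\kappa$ and $g\in\mathcal{H}_{\kappa_Y}$: directly it equals $\langle g,\Sigma_{YX}f\rangle_{\mathcal{H}_{\kappa_Y}}$, while conditioning first on $Y$ and using $\mathbb{E}(f(X)\mid Y) = (Tf)(Y)$ rewrites it as $\text{Cov}((Tf)(Y),g(Y)) = \langle g,\Sigma_{YY}Tf\rangle_{\mathcal{H}_{\kappa_Y}}$. Equating gives $\Sigma_{YX}=\Sigma_{YY}T$, hence $T^*\Sigma_{YY}=\Sigma_{XY}$ on taking adjoints. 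Substituting back produces $T^*\Sigma_{YY}T = \Sigma_{XY}T = \Sigma_{XY}\Sigma_{YY}^{-1}\Sigma_{YX}$, which is the required term.

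The main technical obstacle is giving meaning to $\Sigma_{YY}^{-1}$: when $\kappa_Y$ is of trace class, $\Sigma_{YY}$ is a compact positive operator on $\mathcal{H}_{\kappa_Y}$ and is generally not boundedly invertible. What the argument actually requires is only the range inclusion $\text{Range}(\Sigma_{YX})\subseteq \text{Range}(\Sigma_{YY})$, and this is precisely what the hypothesis $f_Y\in\mathcal{H}_{\kappa_Y}$ supplies through the equation $\Sigma_{YY}(Tf)=\Sigma_{YX}f$ derived above (a Picard-type condition). Under this range condition $T=\Sigma_{YY}^{-1}\Sigma_{YX}$ is a well-defined operator and the identity of the theorem is valid as an operator equation on $\mathcal{H}_\kappa$.
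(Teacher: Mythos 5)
The paper does not actually prove this statement: \cref{thm:cond-var} is imported verbatim from Fukumizu et al.\ (2004) as a cited result, so there is no in-paper proof to compare yours against. Your reconstruction is essentially the standard argument behind that reference and is sound: the operator-valued law of total variance reduces the claim to $\text{Var}\left(\mathbb{E}\left(\phi\left(X\right)\mid Y\right)\right) = \Sigma_{XY}\Sigma_{YY}^{-1}\Sigma_{YX}$, the identity $\Sigma_{YX} = \Sigma_{YY} T$ for the conditional-expectation operator $T f \coloneqq f_Y$ is exactly where the hypothesis $f_Y \in \mathcal{H}_{\kappa_Y}$ enters, and you correctly flag that $\Sigma_{YY}^{-1}$ must be read through the range inclusion $\text{Range}\left(\Sigma_{YX}\right)\subseteq\text{Range}\left(\Sigma_{YY}\right)$ rather than as a bounded inverse (in the paper's actual use the operators are finite-rank sample versions regularized by $\zeta_1\bm{I}$, so this is moot there). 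Two small points worth making explicit if you were to write this out: (i) for $T$ to be well defined as a map into $\mathcal{H}_{\kappa_Y}$ rather than into $L^2\left(P_Y\right)$ you need that no nonzero element of $\mathcal{H}_{\kappa_Y}$ vanishes $P_Y$-almost everywhere, or else you must work with equivalence classes; (ii) the closed-graph step uses that trace-class kernels give $\mathbb{E}\,\kappa\left(X,X\right)<\infty$, so RKHS convergence implies $L^2$ convergence and conditional expectation is an $L^2$ contraction — that is the substance hiding behind ``standard closed-graph arguments.'' Also note the theorem's statement writes $\text{Cov}\left(\phi\left(X\right),Y\right)$ and $\text{Var}\left(Y\right)$, but the paper's subsequent computation with $\bm{\phi}_Y$ and $\bm{K}_{\kappa_Y}$ confirms these are meant as operators on $\mathcal{H}_{\kappa_Y}$, which is the reading you adopted.
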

Let $\bm{K}_{\kappa_y}$ be the kernel matrix generated by the response $\left\{y_i\right\}_{i=1}^n$ and $\kappa_Y$. Denote by $\bm{\phi}_Y \coloneqq \left(\kappa_Y\left(\cdot,y_1\right),\cdots,\kappa_Y\left(\cdot,y_n\right)\right)$, then it follows from \cref{thm:cond-var}
\begin{align*}
\mathbb{E}\left(\text{Var}\left(\bm{\phi}\mid Y\right)\right) 
&= \frac{1}{n}\bm{\phi}\bm{\Gamma}_n\bm{\phi}^\top 
- \frac{1}{n^2}\bm{\phi}\bm{\Gamma}_n\bm{\phi}_Y^\top
\left(\frac{1}{n}\bm{\phi}_Y\bm{\Gamma}_n\bm{\phi}_Y^\top + \zeta_1\bm{I}\right)^{-1}
\bm{\phi}_Y\bm{\Gamma}_n\bm{\phi}^\top\\
&= \frac{1}{n}\bm{\phi}\bm{\Gamma}_n\bm{\phi}^\top 
- \frac{1}{n}\bm{\phi}\left(\bm{\Gamma}_n\bm{K}_{\kappa_Y}\bm{\Gamma}_n + n\zeta_1\bm{I}\right)^{-1}\bm{\Gamma}_n\bm{K}_{\kappa_Y}\bm{\Gamma}_n
\bm{\phi}^\top\\
&= \frac{1}{n}\bm{\phi}
\left[\bm{\Gamma}_n -  
\left(\overline{\bm{K}}_{\kappa_Y} + n\zeta_1\bm{I}\right)^{-1} \overline{\bm{K}}_{\kappa_Y}
\right] \bm{\phi}^\top,
\end{align*}
where $\overline{\bm{K}}_{\kappa_Y} \coloneqq \bm{\Gamma}_n\bm{K}_{\kappa_Y}\bm{\Gamma}_n$ represents the centered kernel matrix of $\bm{y}$ (see e.g., \citealp{Scholkopf98}), and $\zeta_1 > 0$ is a small constant added to ensure the positive definiteness of $\text{Var}\left(Y\right)$. Note that $\bm{\Gamma}_n$ is symmetric as well as idempotent in deriving the second equality.

\section{Algorithms for Parameter Inference}
\label{sec:alg}

In this section, we develop fast algorithms for estimating the parameters of the SIGP \eqref{eq:apx-SIGP}. Learning the IGP \eqref{eq:IGP-def} can be carried out similarly as in training the standard GP by maximizing the marginal likelihood \eqref{eq:posterior} \citep{Rasmussen06}. 

The estimation task is to infer parameters of both the mean function and \eqref{eq:apx-SIGP}, namely the SDR basis $\bm{W}$, $\bm{\Sigma}_\beta$, as well as $\sigma^2$. Specifically, the estimation of $\bm{W}$ uses the log-likelihood \eqref{eq:lik-rkhs-inf}, and can be computed directly by \cref{lem:det-quotient}. In addition, inference of the covariance parameters leverages the low-rank SDR parameterization via an Expectation-Maximization (EM) algorithm. While $\kappa$ is assumed to be given and fixed, inferring the hyper-parameters of $\kappa$ is also possible.

\subsection{Selecting the Sufficient Dimension Reduction Subspace}

Recall the log-likelihood \eqref{eq:lik-rkhs-inf} of the SDR subspace basis $\bm{W}$, whose maximizer from \cref{lem:det-quotient} is given by the leading eigenvectors for the generalized eigenvalue decomposition
\begin{align}
\label{eq:rkhs-eig}
\bm{\Gamma}_n\bm{K} \bm{W}_i = \tau_i \left(\bm{A} + n \zeta\bm{I}_n\right) \bm{W}_i,
\end{align}
where $\bm{\Gamma}_n$ is as defined in \eqref{eq:SIGP}, $\zeta > 0$ is a regularization parameter, and $\bm{A}$ is computed from the kernel matrix $\bm{K}$ discussed next.

As we discussed in \cref{sec:sdr-rkhs}, there are two ways for computing $\bm{A}$: 1) a fast slicing-based approach as in the sliced inverse regression; and 2) the method that uses an additional kernel of the response $Y$ which potentially yields improved estimates in regression by exploiting the local information in $Y$. 

For the slicing-based approach, first partition the data into slices $\left\{\left(x_1,y_1\right),\cdots,\left(x_{n_1},y_{n_1}\right)\right\}$, $\left\{\left(x_{n_1 + 1},y_{n_1+1}\right),\cdots,\left(x_{n_1+n_2},y_{n_1+n_2}\right)\right\}$, and so forth, where $n_i$ denotes the size of the $i$-th slice. Then, let $\bm{A} = \text{diag}\left(\bm{\Gamma}_{n_i}\right)\bm{K}$, and solve the generalized eigenvalue decomposition \eqref{eq:rkhs-eig}. The overall computational complexity in this case is $O\left(n^2 m\right)$. As for the other method, the sorting is not needed, but instead a kernel matrix $\overline{\bm{K}}_{\kappa_Y}$ for $Y$ is needed. We will instead let $\bm{A} = \bm{\Gamma}_n\bm{K} - \left(\overline{\bm{K}}_{\kappa_Y} + n\zeta_1\bm{I}\right)^{-1} \overline{\bm{K}}_{\kappa_Y}\bm{K}$ in \eqref{eq:rkhs-eig}, where $\zeta_1 > 0$ is a constant for the inverse to be well-defined (see also \citealp{Fukumizu04}). Clearly, the computational complexity for solving the slicing-free SDR estimation is $O\left(n^3\right)$.  

\subsection{Estimating the Covariance via Expectation Maximization}
The SIGP regression model \eqref{eq:lr} with the approximate SIGP prior \eqref{eq:apx-SIGP} can be viewed as a latent variable model $Y\left(\cdot\right) \sim u\left(\cdot\right) + \bm{\Pi}\left(\cdot\right)\bm{\beta} + \epsilon$, where $u$ is the mean function and $\bm{\beta} \sim \mathcal{N}\left(\bm{0},\bm{\Sigma}_\beta\right)$ is a latent vector. Intuitively, we may consider an EM algorithm for estimating the variance components $\bm{\Sigma}_\beta$ as well as $\sigma^2$. Parameter inference for the mean can also be performed during the M-step. For ease of exposition, we will use the mean function $u\left(\cdot\right) = \bm{\Pi}\left(\cdot\right)\bm{\alpha} + c$ with parameters $\bm{\alpha}$ and $c$.

First, the log-likelihood of the SIGP regression model after dropping irrelevant terms is written
\begin{align}
\label{eq:log-lik}
\begin{split}
l\left(\bm{\Sigma}_\beta,\sigma^2\right)
\propto -\frac{1}{2}\log\det \bm{\Sigma}_\beta
- \frac{n}{2} \log \sigma^2
- \frac{1}{2}\bm{\beta}^\top \bm{\Sigma}_\beta ^{-1} \bm{\beta}
- \frac{\bm{\epsilon}^\top\bm{\epsilon}}{2 \sigma^2}.
\end{split}
\end{align}
For notational convenience, denote by $\bm{\Pi} \coloneqq \bm{\Pi}\left(\bm{X}\right) = \bm{\Gamma}_n \bm{K}\bm{W}$. From the latent variable model view of the SIGP, we obtain the posterior distribution of $\bm{\beta}$
\begin{align}
\label{eq:cond-beta}
\begin{split}
\mathcal{N}\left(
\widehat{\bm{\Sigma}}_\beta\bm{\Pi}^\top \widehat{\bm{V}}^{-1} \left(\bm{y} - u\left(\bm{X}\right)\right),
\widehat{\bm{\Sigma}}_\beta - \widehat{\bm{\Sigma}}_\beta\bm{\Pi}^\top \widehat{\bm{V}}^{-1} \bm{\Pi}\widehat{\bm{\Sigma}}_\beta\right).
\end{split}
\end{align}
The above posterior distribution gives the MAP estimator $\widehat{\bm{\beta}}$ for the latent variable, which will be used in the M-step. Let $\widehat{\bm{r}} \coloneqq \bm{y} - u\left(\bm{X}\right)$ denote the random effect, the MAP estimator $\widehat{\bm{\beta}}$ can be equivalently expressed as
\begin{align}
\label{eq:beta-opt}
\widehat{\bm{\beta}} = \left(
\widehat{\sigma}^2 \widehat{\bm{\Sigma}}_\beta^{-1} + \bm{\Pi}^\top\bm{\Pi}
\right)^{-1} \bm{\Pi}^\top \widehat{\bm{r}} 
= \widehat{\sigma}^{-2} \widehat{\bm{\Delta}} \bm{\Pi}^\top \widehat{\bm{r}} 
\end{align}
where $\widehat{\bm{\Delta}}$ is as defined in \eqref{eq:pred-var}.

By taking the expectation of \eqref{eq:log-lik} with respect to the posterior distribution \eqref{eq:cond-beta}, we arrive at the E-step:
\begin{align*} 
\mathbb{E}_{\bm{\beta} \mid \mathcal{D}} \left[l\left(\bm{\Sigma}_\beta,\sigma^2\right)\right]
&\propto -\frac{1}{2}\log\det\bm{\Sigma}_\beta -\frac{n}{2} \log \sigma^2 -\frac{1}{2} \widehat{\bm{\beta}}^\top\bm{\Sigma}_\beta^{-1}\widehat{\bm{\beta}} -\frac{1}{2} \tr \widehat{\bm{\Delta}}\bm{\Sigma}_\beta^{-1}\\
&\phantom{=} - \frac{1}{2 \sigma^2} \left[
\left\|\widehat{\bm{r}} - \bm{\Pi}\widehat{\bm{\beta}}\right\|^2
+ \tr \left(\bm{\Pi}\widehat{\bm{\Delta}}\bm{\Pi}^\top\right)
\right],
\end{align*}
where we used the fact that the covariance of \eqref{eq:cond-beta} satisfies $\widehat{\bm{\Sigma}}_\beta - \widehat{\bm{\Sigma}}_\beta\bm{\Pi}^\top\widehat{\bm{V}}^{-1}\bm{\Pi}\widehat{\bm{\Sigma}}_\beta 
= \left(\widehat{\bm{\Sigma}}_\beta^{-1} 
+ \widehat{\sigma}^{-2}\bm{\Pi}^\top \bm{\Pi}\right)^{-1}
= \widehat{\bm{\Delta}}$. Also note that $\widehat{\bm{\beta}}$ and $\widehat{\bm{\Delta}}$ are computed based on the estimated $\widehat{\bm{\Sigma}}_\beta$ and $\widehat{\sigma}^2$ from the previous iteration.

\begin{figure}[t]
  \centering
  \begin{minipage}{.82\linewidth}
\begin{algorithm}[H]
\SetAlgoLined
\SetNlSty{texttt}{[}{]}
\KwIn {$\bm{K}$, $\bm{y}$, SDR rank $m$ as well as regularization parameter $\zeta$, $\xi$}
\KwOut {$\widehat{\bm{\alpha}}$, $\widehat{c}$, $\widehat{\bm{\Sigma}}_{\beta}$, $\widehat{\sigma}^2$}
   \nl Estimate $\bm{W}$ by solving generalized eigenvalue decomposition \eqref{eq:rkhs-eig} \;
   \nl Initialize $\bm{\Pi} \coloneqq \bm{\Gamma}_n \bm{K}\bm{W}$, $\bm{\Lambda} \coloneqq \bm{\Pi}^\top\bm{\Pi}$ \;
   \Repeat{log-likelihood \eqref{eq:log-lik} converges}{
   \nl $\widehat{\bm{V}}^{-1} \leftarrow \widehat{\sigma}^{-2}\left[\bm{I} - \bm{\Pi}\left(\widehat{\sigma}^2\widehat{\bm{\Sigma}}_{\beta}^{-1}+\bm{\Lambda}\right)^{-1}\bm{\Pi}^\top
   \right]$ \;
   \nl $\widehat{\bm{L}} \leftarrow \bm{I} - \bm{1}\bm{1}^\top\widehat{\bm{V}}^{-1} \left(\bm{1}^\top\widehat{\bm{V}}^{-1}\bm{1}\right)^{-1}$ \;
   \nl $\widehat{\bm{\alpha}} \leftarrow \left(\bm{\Pi}^\top\widehat{\bm{V}}^{-1}\widehat{\bm{L}}\bm{\Pi}+n\xi\bm{W}^\top\bm{K}\bm{W}\right)^{-1} \bm{\Pi}^\top\widehat{\bm{V}}^{-1}\widehat{\bm{L}}\bm{y}$ \;
   \nl $\widehat{c} \leftarrow \left(\bm{y}^\top - \widehat{\bm{\alpha}}^\top\bm{\Pi}^\top\right)
   \widehat{\bm{V}}^{-1}\bm{1} \left(\bm{1}^\top\widehat{\bm{V}}^{-1}\bm{1}\right)^{-1}$ \;
   \nl $\widehat{\bm{\Delta}} \leftarrow \left(\widehat{\bm{\Sigma}}_{\beta}^{-1} + \bm{\Lambda}/\widehat{\sigma}^2\right)^{-1}$ \;
   \nl $\widehat{\bm{\beta}} \leftarrow \widehat{\sigma}^{-2}\widehat{\bm{\Delta}}\bm{\Pi}^\top\left(\bm{y} - \bm{\Pi}\widehat{\bm{\alpha}} - \bm{1}\widehat{c}\right)$ \;
   \nl $\widehat{\bm{\Sigma}}_{\beta} \leftarrow \widehat{\bm{\beta}}\widehat{\bm{\beta}}^\top + \widehat{\bm{\Delta}}$ \;
   \nl $\widehat{\sigma}^2 \leftarrow \widehat{\sigma}^2 + n^{-1}\left(\left\|\bm{y} - \bm{\Pi}\left(\widehat{\bm{\beta}} + \widehat{\bm{\alpha}}\right) - \bm{1}\widehat{c}\right\|^2 - \widehat{\sigma}^4 \tr \widehat{\bm{V}}^{-1}\right)$ \;
   }
\caption{EM algorithm for learning the SIGP}
\label{alg:learn}
\end{algorithm}
\end{minipage}
\end{figure}

The M-step maximizes the expectation given in the E-step. The optimization is straightforward by setting the partial derivatives with respect to both $\bm{\Sigma}_\beta^{-1}$ and $\sigma^{-2}$ to zero. The resulting updates are given by
\begin{gather}
\label{eq:upd-sigma-beta}
\widehat{\bm{\Sigma}}_\beta \leftarrow
  \widehat{\bm{\beta}}\widehat{\bm{\beta}}^\top
+ \widehat{\bm{\Delta}}\\
\widehat{\sigma}^2 \leftarrow 
\widehat{\sigma}^2 + \frac{1}{n} 
\left(\left\|\widehat{\bm{r}} - \bm{\Pi}\widehat{\bm{\beta}}\right\|^2
- \widehat{\sigma}^4 \tr \widehat{\bm{V}}^{-1} \right). \label{eq:s2-est}
\end{gather}
Note that the random effect $\widehat{\bm{r}}$ also depends on the mean function which has parameters $\widehat{\bm{\alpha}}$ and $\widehat{c}$. These parameters can be optimized in the M-step as well. For example, the Tikhonov regularization \eqref{eq:reg-obj} with the a quadratic loss can be used:
\begin{align*}
\arg \min_{\bm{\alpha},c} \quad \frac{1}{n} \left(\bm{y} - \bm{\Pi}\bm{\alpha} - c\bm{1}\right)^\top\widehat{\bm{V}}^{-1} \left(\bm{y} - \bm{\Pi}\bm{\alpha} - c\bm{1}\right)
+ \xi \bm{\alpha}^\top\bm{W}^\top\bm{K}\bm{W}\bm{\alpha},
\end{align*}
where $\xi > 0$ is the regularization parameter. Denote by $\bm{L} \coloneqq \bm{I} - \bm{1}\bm{1}^\top\widehat{\bm{V}}^{-1}\left(\bm{1}^\top\widehat{\bm{V}}^{-1}\bm{1}\right)^{-1}$ the centering matrix, it is easy to obtain the following estimates
\begin{gather}
\label{eq:opt-mean}
\widehat{\bm{\alpha}} = \left(\bm{\Pi}^\top\widehat{\bm{V}}^{-1}\bm{L}\bm{\Pi} + n\xi\bm{W}^\top\bm{K}\bm{W}\right)^{-1}\bm{\Pi}^\top\widehat{\bm{V}}^{-1}\bm{L}\bm{y}\\
\widehat{c} = \bm{1}^\top\widehat{\bm{V}}^{-1}\left(\bm{y} - \bm{\Pi}\widehat{\bm{\alpha}}\right) \left(\bm{1}^\top\widehat{\bm{V}}^{-1}\bm{1}\right)^{-1}.
\end{gather}

\cref{alg:learn} gives the pseudo-code for the above EM algorithm. We remark that the computation of \cref{alg:learn} is efficient, requiring $O\left(n^2 m\right)$ time per iteration by taking advantage of the low-rank SDR parameterization. Furthermore, $\widehat{\bm{V}}^{-1}$ is computed via the Woodbury identity
\begin{align*}
\bm{V}^{-1} = \sigma^{-2} \left[\bm{I} - \bm{\Pi}\left(\sigma^2\bm{\Sigma}_\beta^{-1} + \bm{\Pi}^\top\bm{\Pi}\right)^{-1}\bm{\Pi}^\top\right],
\end{align*}
which reduces the computational complexity $O\left(n^3\right)$ of the inverse to $O\left(n^2 m\right)$. In practice, $m$ can be very small as we will show on several real datasets.

\section{Experiments}
\label{sec:exp}

We present experiments to 1) highlight the difference in the predictive distribution between the SIGP and the standard GP regression model; 2) illustrate the impact of the SDR approximation on the SIGP prediction; and 3) demonstrate that the SIGP with low-rank SDR approximation achieves competitive performance compared to state-of-the-art GP inference methods on a diverse collection of real-life datasets.

For the comparison, we consider several state-of-the-art GP inference methods, namely the Laplace’s approximation (Laplace) \citep{Rasmussen06}, Kullback-Leibler divergence minimization (KL) \citep{Nickisch08}, expectation propagation (EP) \citep{Minka01}, and fully independent training conditional (FITC) \citep{Snelson06}. The experiments are based on the GP implementation in GPML toolbox \citep{Rasmussen10} which is generally considered to be amongst the best implementation of these algorithms. The support vector machine (SVM) results are based on the {\tt fitcsvm} function from the Matlab. All methods use the radial basis kernel of which the parameters for SVM and SIGP are obtained via cross-validation. For all other GP inference methods, we optimize hyper-parameters using L-BFGS \citep{Liu89} for 1000 iterations. In addition to learning kernel hyper-parameters, we fit a linear mean function.

\subsection{Illustration on Synthetic Data}

Since the SDR approximation in the SIGP captures the functional dependence of $Y$ on $X$, the SIGP is particularly well-suited to modeling data with sampling biases.

\begin{figure}[h]
\centering
\includegraphics[scale=0.7]{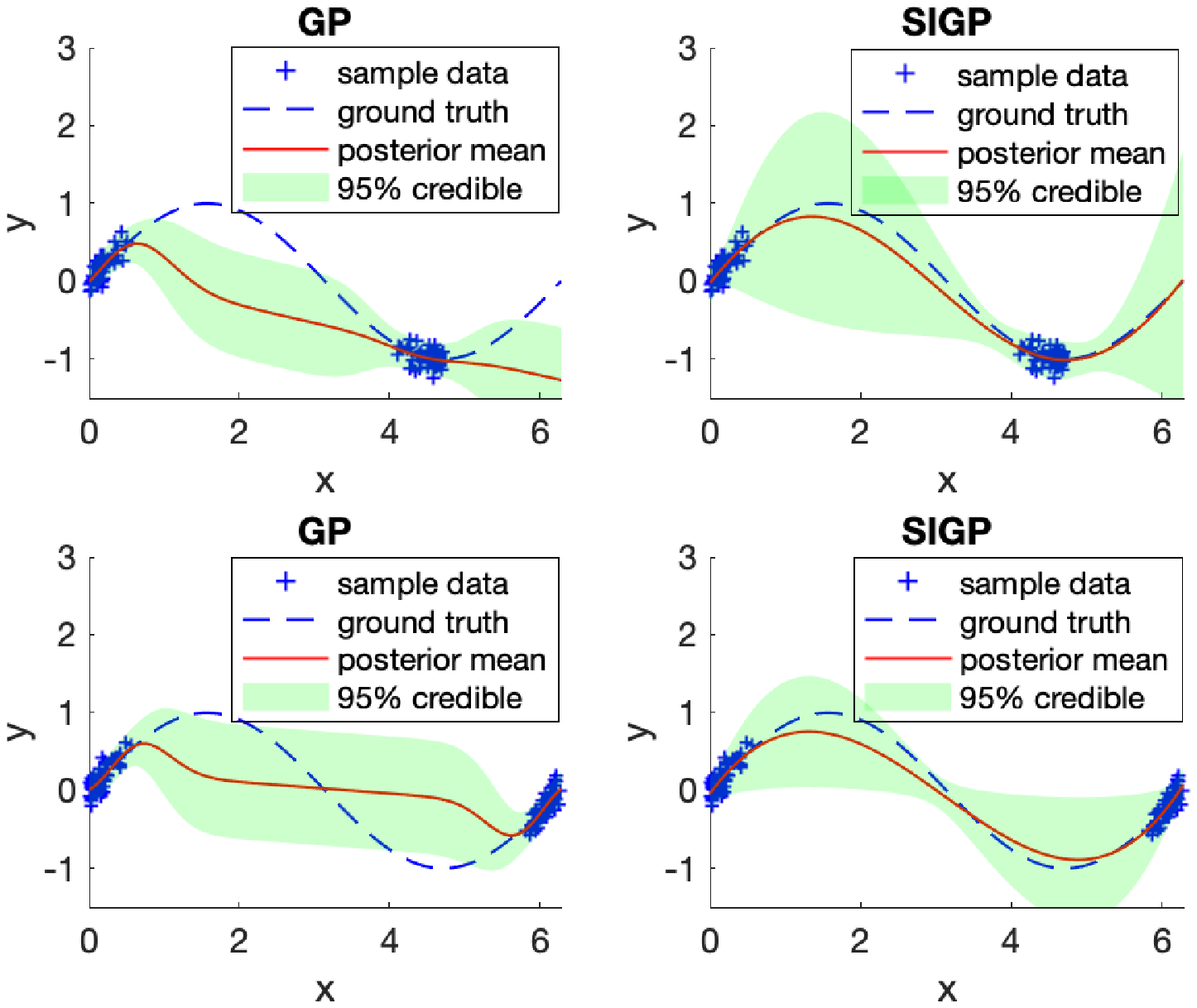}
\caption{Inference in a sinusoidal function with training data sampled at different locations.}
\label{fig:sinusoid}
\end{figure}

\cref{fig:sinusoid} depicts two samples at different locations, and the data is generated from a sine function with an additive Gaussian noise $\mathcal{N}\left(0,0.01\right)$. The posterior means and $95\%$ credible intervals produced by the GP and SIGP are shown. The SIGP recovered the ground truth sine function on both samples, and tend to give more realistic uncertainty estimates at the unseen positions. This suggests that the SIGP can be useful to address problems such as covariate shift.

\subsubsection{Impact of the SDR Approximation}
\label{sec:sdr-impact}

Now consider a classification example shown in \cref{fig:class-fit}. The toy dataset consists of  2D points from four classes. Note that each corner consists of points from two distinct classes. The figure shows the contours generated by the GP and SIGP. The SIGP-$m$ denote the SIGP with rank-$m$ SDR approximation. 

\begin{figure}[h]
\centering
\begin{adjustbox}{width=\columnwidth}
\input{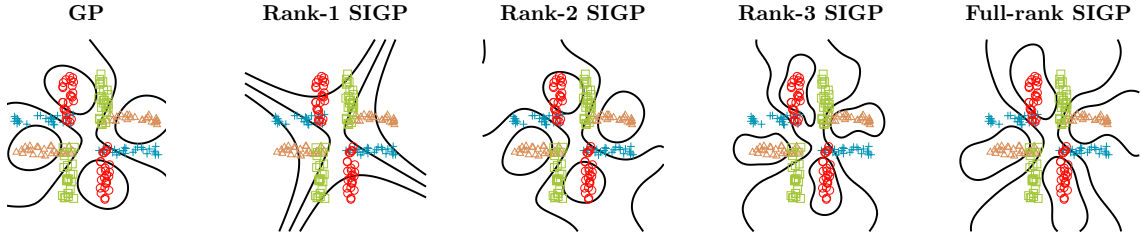}
\end{adjustbox}
\caption{Comparing the separating contours produced by the GP and SIGP on a 2D dataset with four classes marked by \textcolor[rgb]{0.85000,0.56000,0.35000}{$\vartriangle$}, \textcolor[rgb]{0.00000,0.58000,0.71000}{$+$}, \textcolor{lime!56!gray}{$\Box$}, and \textcolor{red!94!orange}{$\ocircle$}, respectively.}
\label{fig:class-fit}
\end{figure} 

For SIGP-$1$, the contours only separate the classes at the corners. This is because the basis functions of the SDR subspace correspond to principal components of the normal vectors (functions in the RKHS) to the contours \citep{Li11}. Consequently, a rank-$1$ SDR subspace is insufficient to separate the four classes. However, as we increase the SDR rank all classes are successfully separated by the contours of the SIGP. In general, a rank-$m$ SIGP suffices to classify at least $m+1$ classes.

\begin{figure}[h]
\centering
\input{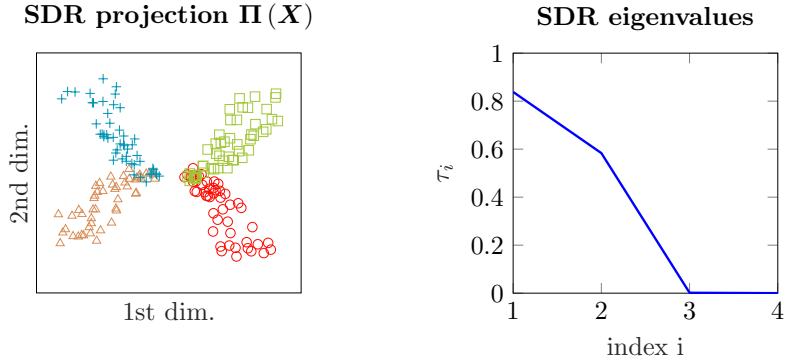}
\caption{Projection of the data onto the first two dimensions of the SDR basis.}
\label{fig:sdr}
\end{figure}

\cref{fig:sdr} plots the projection of the data onto the first two dimensions of the SDR subspace estimated by the SIGP. As can be seen, the data is well-separated in the SDR subspace representation of the SIGP. In addition, the eigenvalue drops quickly to zero at $\tau_3 = 0$, suggesting that the SIGP with rank-$2$ SDR approximation is sufficient for the data. Indeed, this is the case as shown in \cref{fig:class-fit}. 

\subsection{Results on Real-Life Data}

We compare the performance of the SIGP and several state-of-the-art methods for classification and regression on UCI datasets as well as real-world environmental datasets from WCCI-2006 Predictive Uncertainty in Environmental Modeling Competition. \cref{tbl:data-size} reports the total number of observations, the number of attributes, as well as the number of test cases for each dataset.

\begin{table}[htb]
\caption{The size as well as the training/testing splits of each dataset.}
\label{tbl:data-size}
\vskip 0.15in
\begin{center}
\begin{small}
\begin{sc}
\begin{tabular}{cccc|cccc}
\toprule
Dataset & \#Obser. & \#Attr. & \#Test & Dataset & \#Obser. & \#Attr. & \#Test\\
\midrule
Arcene  & 200      & 10000   & 100    & Cancer  & 699      & 10      & 200\\
Gisette & 7000     & 5000    & 1000   & Housing & 506      & 13      & 106\\
Madelon & 2600     & 500     & 600    & Temp    & 7117     & 106     & 3558\\
German  & 1000     & 24      & 300    & Wine    & 3098     & 11      & 1800\\
Heart   & 270      & 13      & 100    & Precip  & 7031     & 106     & 3515\\
\bottomrule
\end{tabular}
\end{sc}
\end{small}
\end{center}
\end{table}

\paragraph{Binary Classification} For the binary classification task, we use the F1 score, i.e., $2 \times \text{precision}\times\text{recall} / \left(\text{precision} + \text{recall}\right)$, as the balanced accuracy metric. Since the classification is binary, we use the SIGP with rank-$1$ SDR approximation.

\begin{table*}[htb]
\caption{Comparing F1 scores in binary classification.}
\label{tbl:F1}
\vskip 0.15in
\begin{center}
\begin{small}
\begin{threeparttable}
\begin{sc}
\begin{tabular}{lcccccc}
\toprule
Dataset & Laplace & KL & EP & FITC\tnote{1} & SVM & SIGP-1\\
\midrule
Arcene  & 0.8235 & 0.8269 & 0.8235 & 0.8235 & 0.8352 & {\bf 0.8571}\\
Gisette & 0.9570 & 0.9541 & 0.9571 & 0.9571 & 0.9670 & {\bf 0.9780}\\
Madelon & 0.5695 & 0.5695 & 0.5695 & 0.5695 & 0.5990 & {\bf 0.6367}\\
German & 0.6211 & 0.6211 & 0.6211 & 0.6125 & 0.6182  & {\bf 0.6424}\\
Heart  & 0.8409 & 0.8409 & 0.8409 & 0.8506 & {\bf 0.8605} & {\bf 0.8605}\\
Cancer & 0.9425 & 0.9213 & 0.9438 & 0.9778 & 0.9778 & \bf 0.9888\\
\bottomrule
\end{tabular}
\end{sc}
\begin{tablenotes}
\item[1] Using $\floor*{n/2}$ pseudo-inputs.
\end{tablenotes}
\end{threeparttable}
\end{small}
\end{center}
\end{table*}

\cref{tbl:F1} reports the F1 scores on the benchmark datasets. From the table, the SIGP performs competitively compared to state-of-the-art methods, particularly on the high-dimensional dataset {\tt Arcene}.

\paragraph{Regression}
For regression, we report the negative log predictive density (NLPD) and mean squared error (MSE) for the SIGP and the GP with the FITC inference which is known to yield better NLPD than the other inference methods \citep{Snelson06a}.

\begin{table*}[h]
\caption{Prediction performance on the held-out validation data. FITC-$t$ denotes the FITC method using $t$ pseudo-inputs. Since the {\tt Housing} dataset is relatively small, the entire training data is used.}
\label{tbl:reg}
\begin{center}
\begin{small}
\begin{sc}
\begin{tabular}{lcccccccccc}
\toprule
\multirow{2}{*}{Method} & 
\multicolumn{2}{c}{Housing}
& \multicolumn{2}{c}{Temp}
& \multicolumn{2}{c}{Wine} 
& \multicolumn{2}{c}{Precip}\\
\cline{2-3} \cline{4-5} \cline{6-7} \cline{8-9}
 & NLPD & MSE & NLPD & MSE & NLPD & MSE & NLPD & MSE\\
\midrule
Linear Reg.\ & 3.2363 & 37.8938 & 0.1265 & 0.0754 & 1.1052 & 0.5340 & 1.8268 & 2.2607\\
FITC-$1500$  & 3.1200 & 28.8048 & 0.0522 & 0.0649 & 1.1002 & 0.5417 & 1.7258 & 1.8298\\
FITC-$2000$  & 3.1200 & 28.8048 & 0.0527 & 0.0650 & 1.0972 & 0.5724 & 1.7229 & 1.8353\\
FITC-$2500$  & 3.1200 & 28.8048 & 0.0520 & 0.0647 & 1.0966 & 0.5628 & 1.7204 & 1.8302\\
SIGP-$1$     & 2.7756 & 15.0003 & 0.0531 & 0.0640 & 1.0953 & 0.5228 & 1.7221 & 1.8349\\
SIGP-$2$     & \bf 2.7459 & \bf 14.2078 & 0.0513 & 0.0628 & \bf 1.0905 & \bf 0.5177 & \bf 1.7135 & \bf 1.7664\\
SIGP-$3$     & 2.8393 & 16.5767 & \bf 0.0498 & \bf 0.0617 & 1.0911 & 0.5181 & 1.7163 & 1.7783\\
\bottomrule
\end{tabular}
\end{sc}
\end{small}
\end{center}
\end{table*}

\cref{tbl:reg} compares the NLPD (smaller is better) as well as MSE on the held-out validation data. The experiment shows that higher rank SDR approximation in the SIGP may not necessarily improve the predictive performance. This is reasonable as increasing the rank may also overfit the data, and the optimal rank actually depends on the structure of data, e.g., the number of classes as discussed in \cref{sec:sdr-impact}. The result in \cref{tbl:reg} suggests that the SIGP performs consistently in both MSE as well as NLPD, and is a state-of-the-art method for regression.

\section{Conclusions}
\label{sec:concl}

In this paper, we introduce novel non-parametric stochastic regression
models based on integral representations of Gaussian processes. We
provide a characterization of the sample paths of these GP models with
respect to the RKHS that contains the sample paths. The theoretical
ideas developed in formulating the novel GP is of interest in
itself. We then show how we can use the GP defined by the integral
representations for computationally efficient and statistically
accurate non-parametric regression using a data-dependent kernel
model. We illustrate the practical utility of this via results on
simulated and real data. From a machine learning perspective we
provide a way to efficiently infer hyper-parameters in a
data-dependent way that takes prediction into account for Gaussian
processes.

We suspect that there are extensions to the IGP and our sample-based
implementation of the IGP both from a theoretical and practical
perspective. In addition, considering powers of integral operators
fits into the perspective of understanding the power of deep learning
as interpolation \citep{Belkin18} and the
idea of analyzing deep learning via kernel models that interpolate
\citep{Belkin18a}.

\section*{Code and Data}
The datasets as well as the sample-based implementation of the IGP are available on the Git repository: \url{https://github.com/ZilongTan/sigp}.

\section*{Acknowledgements}

Z.T.\ was supported in part by grant NSF DMS-1713012. S.M.\ would like to acknowledge the support of grants NSF IIS-1546331, NSF DMS-1418261, NSF IIS-1320357, NSF DMS-1045153, and NSF DMS-1613261.

\bibliographystyle{apalike}
\bibliography{ref}

\end{document}